\theoremstyle{plain}
\newtheorem{thm}{Theorem}
\newtheorem{theorem}[thm]{Theorem}
\newtheorem{definition}[thm]{Definition}
\newtheorem{lemma}[thm]{Lemma}
\newtheorem{proposition}[thm]{Proposition}
\newtheorem{example}[thm]{Example}
\newtheorem*{lemmaappendix}{Lemma}
\newtheorem*{theoremappendix}{Theorem}
\def\Var{\mathop{\rm Var}}
\begin{document}
%
\title{Symmetry-Aware Marginal Density Estimation}
\author{Mathias Niepert\\
Computer Science \& Engineering\\
University of Washington\\
Seattle, WA 98195-2350, USA\\
}
\maketitle
\begin{abstract}
\begin{quote}
The Rao-Blackwell theorem is utilized to analyze and improve the scalability of inference in large probabilistic models that exhibit symmetries. A novel marginal density estimator is introduced and shown both analytically and empirically to outperform standard estimators by several orders of magnitude. The developed theory and algorithms apply to a broad class of  probabilistic models including statistical relational models considered not susceptible to lifted probabilistic inference.
\end{quote}
\end{abstract}

\section{Introduction}

Many successful applications of artificial intelligence research are based on large probabilistic models. Examples include Markov logic networks~\cite{RD:2006}, conditional random fields~\cite{lafferty:2001} and, more recently, deep learning architectures~\cite{hinton:2006,bengio:2007,poon:2011}.
Especially the models one encounters in the statistical relational learning (SRL) literature often have joint distributions spanning millions of variables and features. Indeed, these models are so large that, at first sight, inference and learning seem daunting. For numerous of these models, however, scalable approximate and, to a lesser extend, exact inference algorithms do exist. Most notably, there has been a strong focus on lifted inference algorithms, that is, algorithms that group indistinguishable variables and features during inference. For an overview we refer the reader to~\cite{kersting:2012}. Lifted algorithms facilitate efficient inference in numerous large probabilistic models for which inference is NP-hard in principle.
 
We are concerned with the estimation of marginal probabilities based on a finite number of sample points. We show that the feasibility of inference and learning in large and highly symmetric probabilistic models can be explained with the Rao-Blackwell theorem from the field of statistics. The theory and algorithms do not directly depend on the syntactical nature of the relational models such as arity of predicates and number of variables per formula but only on the given automorphism group of the probabilistic model, and are applicable to classes of  probabilistic models much broader than the class of statistical relational models.
 
Consider an experiment where a coin is flipped $n$ times. While a frequentist would assume the flips to be i.i.d., a Bayesian typically makes the weaker assumption of \emph{exchangeability} -- that the probability of an outcome sequence only depends on the number of ``heads" in the sequence and not on their order. Under the non-i.i.d. assumption, a possible corresponding graphical model is the fully connected graph with $n$ nodes and high treewidth. The actual number of parameters required to specify the distribution, however, is only $n+1$, one for each sequence with $0 \leq k \leq n$ ``heads." Bruno de Finetti was the first to realize that such a sequence of random variables can be (re-)parameterized as a unique mixture of $n+1$ independent urn processes~\cite{finetti:1938}. 
It is this notion of a parameterization as a mixture of urn processes that is at the heart of our work. A direct application of de Finetti's results, however, is often impossible since not all  variables are exchangeable in realistic probabilistic models. 

Motivated by the intuition of exchangeability, we show that \emph{arbitrary} model symmetries allow us to re-paramterize the  distribution as a mixture of  independent urn processes where each urn consists of isomorphic joint assignments. Most importantly, we develop a novel Rao-Blackwellized estimator that implicitly estimates the fewer parameters of the simpler mixture model and, based on these, computes the marginal densities. We identify situations in which the application of the Rao-Blackwell estimator is tractable. In particular, we show that the Rao-Blackwell estimator is always  linear-time computable for single-variable marginal density estimation. By invoking the Rao-Blackwell theorem, we show that the mean squared error of the novel estimator is at least as small as that of the standard estimator and strictly smaller under non-trivial symmetries of the probabilistic model. Moreover, we prove that for estimates based on sample points drawn from a Markov chain $\mathcal{M}$, the bias of the Rao-Blackwell estimator is governed by the mixing time of the quotient Markov chain whose convergence behavior is superior to that of $\mathcal{M}$.

We present empirical results verifying that the Rao-Blackwell estimator always  outperforms the standard estimator by up to several orders of magnitude, irrespective of the model structure. Indeed, we show that the results of the novel estimator resemble those typically observed in lifted inference papers. For the first time such a performance is shown for an SRL model with a transitivity formula.

\section{Background}

We review some concepts from group and estimation theory.

\subsubsection{Group Theory}

A group is an algebraic structure ($\mathfrak{G}, \circ$), where $\mathfrak{G}$ is a set closed under a binary associative operation $\circ$ with an identity element and a unique inverse for each element. We often write $\mathfrak{G}$ rather than  ($\mathfrak{G}, \circ$). A permutation group acting on a set $\Omega$ is a set of bijections  $\mathfrak{g} : \Omega \rightarrow \Omega$ that form a group.
Let $\Omega$ be a finite set and let $\mathfrak{G}$ be a permutation group acting on $\Omega$. If $\alpha \in \Omega$ and $\mathfrak{g} \in \mathfrak{G}$ we write $\alpha^\mathfrak{g}$ to denote the image of $\alpha$ under $\mathfrak{g}$. 
A cycle $(\alpha_1\ \alpha_2\ ...\ \alpha_n)$
represents the permutation that maps $\alpha_1$ to $\alpha_2$, $\alpha_2$
to $\alpha_3$,..., and $\alpha_n$ to $\alpha_1$.
Every permutation can be written as a product of disjoint cycles. A generating set $R$ of a group is a subset of the group's elements such that every element of the group can be written as a product of finitely many elements of $R$ and their inverses.
 
We define a relation $\sim$ on $\Omega$ with $\alpha \sim \beta$ if and only if there is a permutation $\mathfrak{g} \in \mathfrak{G}$ such that $\alpha^\mathfrak{g} = \beta$. The relation partitions $\Omega$ into equivalence classes which we call orbits. We call this partition of $\Omega$ the orbit partition induced by $\mathfrak{G}$. We use the notation $\alpha^{\mathfrak{G}}$ to denote the orbit $\{\alpha^\mathfrak{g}\ |\ \mathfrak{g} \in \mathfrak{G}\}$ containing $\alpha$. For a permutation group $\mathfrak{G}$ acting on $\Omega$ and a sequence $\mathbf{A} = \langle \alpha_1, ..., \alpha_k\rangle \in \Omega^k$ we write $\mathbf{A}^g$ to denote the image $\langle {\alpha_1}^g, ..., {\alpha_k}^g\rangle$ of $\mathbf{A}$ under $\mathfrak{g}$. Moreover, we write  $\mathbf{A}^{\mathfrak{G}}$ to denote the orbit of the sequence $\mathbf{A}$. 



\subsubsection{Point Estimation}

 Let $s_1, ..., s_N$ be $N$ sample points drawn from some distribution $P$. An estimator $\hat{\theta}_N$ of a parameter $\theta$ is a function of $s_1, ..., s_N$. The bias of an estimator is defined by $\mathtt{bias}(\hat{\theta}_N) := \mathbb{E}[\hat{\theta}_N -\theta]$ and the variance by $\Var(\hat{\theta}_N) :=  \mathbb{E}[(\hat{\theta}_N - \mathbb{E}(\hat{\theta}_N))^2]$, where $\mathbb{E}$ is the expectation with respect to $P$, the distribution that generated the data. We say that $\hat{\theta}_N$ is unbiased if $\mathtt{bias}(\hat{\theta}_N) = 0$. The quality of an estimator is often assessed with the mean squared error (MSE) defined by $\mbox{MSE}[\hat{\theta}_N] := \mathbb{E}[(\hat{\theta}_N - \theta)^{2}] = \Var(\hat{\theta}_N) + \mathtt{bias}(\hat{\theta}_N)^2$.
 
 \begin{theorem}[Rao-Blackwell]
Let $\hat{\theta}$ be an estimator with $\mathbb{E}[\hat{\theta}^2] < \infty$ and $T$ a sufficient statistic both for $\theta$, and let $\hat{\theta}^{*} := \mathbb{E}[\hat{\theta} \mid T]$. Then, $\textit{MSE}[\hat{\theta}^{*}] \leq \textit{MSE}[\hat{\theta}]$. Moreover, $\textit{MSE}[\hat{\theta}^{*}] < \textit{MSE}[\hat{\theta}]$ unless $\hat{\theta}^{*}$ is a function of $\hat{\theta}$.
\end{theorem}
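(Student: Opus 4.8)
The plan is to establish the two classical ingredients — that $\hat{\theta}^{*}$ is an honest estimator with exactly the same bias as $\hat{\theta}$, and that its variance cannot exceed that of $\hat{\theta}$ — and then combine them through the bias--variance decomposition $\mathrm{MSE}[\hat{\theta}_N] = \Var(\hat{\theta}_N) + \mathtt{bias}(\hat{\theta}_N)^2$ recorded above. First I would check that $\hat{\theta}^{*} := \mathbb{E}[\hat{\theta}\mid T]$ is genuinely an estimator, i.e.\ a function of $s_1,\dots,s_N$ alone and not of the unknown $\theta$. This is precisely where sufficiency of $T$ is used: by definition the conditional distribution of the sample given $T$ does not depend on $\theta$, hence neither does $\mathbb{E}[\hat{\theta}\mid T]$; the hypothesis $\mathbb{E}[\hat{\theta}^2]<\infty$ guarantees this conditional expectation is well defined and square-integrable. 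For the bias, the tower rule gives $\mathbb{E}[\hat{\theta}^{*}] = \mathbb{E}\bigl[\mathbb{E}[\hat{\theta}\mid T]\bigr] = \mathbb{E}[\hat{\theta}]$, so $\mathtt{bias}(\hat{\theta}^{*}) = \mathtt{bias}(\hat{\theta})$.

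Next I would bound the variance using the law of total variance, $\Var(\hat{\theta}) = \mathbb{E}\bigl[\Var(\hat{\theta}\mid T)\bigr] + \Var\bigl(\mathbb{E}[\hat{\theta}\mid T]\bigr) = \mathbb{E}\bigl[\Var(\hat{\theta}\mid T)\bigr] + \Var(\hat{\theta}^{*}) \ge \Var(\hat{\theta}^{*})$, since the first summand is the expectation of a nonnegative random variable. (Equivalently, and more directly, conditional Jensen yields $\mathbb{E}[(\hat{\theta}-\theta)^2\mid T] \ge \bigl(\mathbb{E}[\hat{\theta}-\theta\mid T]\bigr)^2 = (\hat{\theta}^{*}-\theta)^2$, and taking expectations gives $\mathrm{MSE}[\hat{\theta}]\ge\mathrm{MSE}[\hat{\theta}^{*}]$ in one step.) Combining the two facts, $\mathrm{MSE}[\hat{\theta}] = \Var(\hat{\theta}) + \mathtt{bias}(\hat{\theta})^2 \ge \Var(\hat{\theta}^{*}) + \mathtt{bias}(\hat{\theta}^{*})^2 = \mathrm{MSE}[\hat{\theta}^{*}]$.

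For the strict inequality, equality forces $\mathbb{E}\bigl[\Var(\hat{\theta}\mid T)\bigr] = 0$, hence $\Var(\hat{\theta}\mid T) = 0$ almost surely, which means $\hat{\theta}$ agrees almost surely with a function of $T$ — exactly the degenerate case $\hat{\theta} = \hat{\theta}^{*}$ (a.s.), in which $\hat{\theta}^{*}$ is trivially a function of $\hat{\theta}$ and Rao--Blackwellization changes nothing. There is no genuinely hard step here; the one place that requires care is the legitimacy of $\hat{\theta}^{*}$ as an estimator, which rests entirely on the sufficiency of $T$, and matching the equality condition of the total-variance (or conditional-Jensen) step with the statement's phrasing.
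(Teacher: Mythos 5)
Your proof is correct, and it is the standard textbook argument: sufficiency guarantees that $\mathbb{E}[\hat{\theta}\mid T]$ is a bona fide estimator, the tower rule preserves the bias, and the law of total variance (equivalently, conditional Jensen applied to $(\hat{\theta}-\theta)^2$) gives $\mathrm{MSE}[\hat{\theta}^{*}]\leq\mathrm{MSE}[\hat{\theta}]$, with equality forcing $\hat{\theta}=\hat{\theta}^{*}$ almost surely, which correctly handles the ``unless'' clause by contraposition. Note that the paper itself offers no proof of this statement --- it is quoted as a classical result from statistics and merely cited (to Blackwell, 1947) where it is invoked in the proof of its Theorem~2 --- so there is no in-paper argument to compare against; your derivation fills that gap correctly.
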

 
\subsubsection{Finite Markov chains}

A finite Markov chain $\mathcal{M}$ defines a random walk  on elements of a finite set $\Omega$. For all $x, y \in \Omega$, $Q(x, y)$ is the chain's probability to transition from $x$ to $y$, and $Q^t(x, y)=Q^{t}_{x}(y)$ the probability of being in state $y$ after $t$ steps if the chain starts at $x$. A Markov chain is irreducible if for all $x, y \in \Omega$ there exists a $t$ such that $Q^t(x, y) > 0$ and aperiodic if for all $x \in \Omega$, $\mathsf{gcd}\{t \geq 1\ |\ Q^t(x, x) > 0\} = 1$. An irreducible and aperiodic chain converges to its unique stationary distribution and is called ergodic.

The total variation distance $d_{\mathsf{tv}}$ of the Markov chain from its stationary distribution $\pi$ at time $t$ with initial state $x$ is defined by 
$$\ d_{\mathsf{tv}}(Q^{t}_{x},\pi) = \frac{1}{2} \sum_{y \in \Omega} |Q^{t}(x, y) - \pi(y)|.$$ For $\varepsilon > 0$, let $\tau_x(\varepsilon)$ denote the least value $T$ such that $d_{\mathsf{tv}}(Q^{t}_{x},\pi) \leq \varepsilon$ for all $t \geq T$. The  \emph{mixing time} $\tau(\varepsilon)$ is defined by $\tau(\varepsilon) = \max\{\tau_x(\varepsilon)\ |\ x \in \Omega\}$.

\section{Related Work}

There are numerous lifted inference algorithms such as lifted variable elimination~\cite{poole:2003}, lifted belief propagation~\cite{singla2008lifted,kersting2009counting}, first-order knowledge compilation~\cite{broeck:2011}, and lifted variational inference~\cite{choi:2012}. Probabilistic theorem proving applied to a clustering of the relational model was used to lift the Gibbs sampler~\cite{venugopal:2012}. Recent work exploits automorphism groups of probabilistic models for more efficient probabilistic inference~\cite{hung:2012,niepert2012omcmc}. Orbital Markov chains~\cite{niepert2012omcmc} are a class of Markov chains that implicitly operate on the orbit partition of the assignment space and do not invoke the Rao-Blackwell theorem. 

Rao-Blackwellized (RB) estimators have been used for inference in Bayesian networks~\cite{doucet:2000,bidyu:2007} and latent Dirichlet allocation~\cite{teh:2006} with application in robotics~\cite{stachniss:2005} and activity recognition~\cite{bui:2002}. The RB theorem and estimator are important concepts in statistics~\cite{gelfand:1990,casella:1996}. 

\section{Symmetry-Aware Point Estimation}

An automorphism group of a probabilistic model is a group whose elements are permutations of the probabilistic model's random variables $\mathbf{X}$ that leave the joint distribution $P(\mathbf{X})$ invariant. There is a growing interest in computing and utilizing automorphism groups of probabilistic models for more efficient inference algorithms~\cite{hung:2012,niepert2012omcmc}. The line of research is primarily motivated  by the highly symmetric nature of statistical relational models and provides a complementary view on lifted probabilistic inference. 
Here, we will not be concerned with deriving automorphism groups of probabilistic models but with developing algorithms that utilize these permutation groups for efficient marginal density estimation. Hence, we always assume a given automorphism group $\mathfrak{G}$ of the probabilistic model under consideration.

We begin by deriving a re-parameterization of the joint distribution in the presence of symmetries that generalizes the mixture of independent urn processes parameterization for finitely exchangeable variables~\cite{diaconis:1980}. All random variables are assumed to be discrete.

Let $\mathbf{X}=\langle X_1, ..., X_n\rangle$ be a finite sequence of discrete random variables with joint distribution $P(\mathbf{X})$, let $\mathfrak{G}$ be an automorphism group of $\mathbf{X}$, and let $\mathcal{O}$ be an orbit partition of the assignment space induced by $\mathfrak{G}$. Please note that for any $\mathbf{x}, \mathbf{x'} \in O \in  \mathcal{O}$ we have $P(\mathbf{x})=P(\mathbf{x'})$. For a subsequence $\mathbf{\hat{X}}$ of $\mathbf{X}$ and an orbit $O \in \mathcal{O}$ we write $P(\mathbf{\hat{X}}=\mathbf{\hat{x}}\ |\ O)$ for the marginal density $P(\mathbf{\hat{X}}=\mathbf{\hat{x}})$ conditioned on $O$. Thus,  $$P(\mathbf{\hat{X}}=\mathbf{\hat{x}}\ |\ O) = \frac{1}{|O|} \sum_{\mathbf{x} \in O} \mathbb{I}_{\{\mathbf{x}\langle\mathbf{\hat{X}}\rangle=\mathbf{\hat{x}}\}},$$ where $\mathbb{I}$ is the indicator function and $\mathbf{x}\langle\mathbf{\hat{X}}\rangle$ the assignment within $\mathbf{x}$ to the variables in the sequence $\mathbf{\hat{X}}$.
We can now (re-)parameterize the marginal density as a mixture of independent orbit distributions $$P(\mathbf{\hat{X}}=\mathbf{\hat{x}}) = \sum_{O \in \mathcal{O}}  P(\mathbf{\hat{X}}=\mathbf{\hat{x}}\ |\ O )P(O), $$where $P(O)=\sum_{\mathbf{x}\in O}P(\mathbf{X}=\mathbf{x})$. For instance, the joint distribution of the Markov logic network in Figure~\ref{fig:lumping}(a) can be parameterized as a mixture of the distributions for the $10$ orbits depicted in Figure~\ref{fig:lumping}(c). 

Let us first recall the standard estimator used in most sampling approaches. After collecting $N$ sample points $s_1, ..., s_N$ the standard estimator for the marginal density $\theta := P(\mathbf{\hat{X}}=\mathbf{\hat{x}})$ is defined as
\begin{equation}
\hat{\theta}_{N} := \frac{1}{N}\sum_{i = 1}^{N} \mathbb{I}_{\{s_i\langle\mathbf{\hat{X}}\rangle=\mathbf{\hat{x}}\}}.
\end{equation}

Now, the \emph{symmetry-aware Rao-Blackwell} estimator for $N$ sample points $s_1, ..., s_N$ is defined as
\begin{equation}
\hat{\theta}^{\mathtt{rb}}_{N} := \frac{1}{N}\sum_{i = 1}^{N} P(\mathbf{\hat{X}}=\mathbf{\hat{x}}\ |\ {s_i}^{\mathfrak{G}}),
\end{equation} 
where $\mathfrak{G}$ is the given automorphism group that induces $\mathcal{O}$.
\begin{figure}[t!]
\begin{center}
\includegraphics[width=0.35\textwidth]{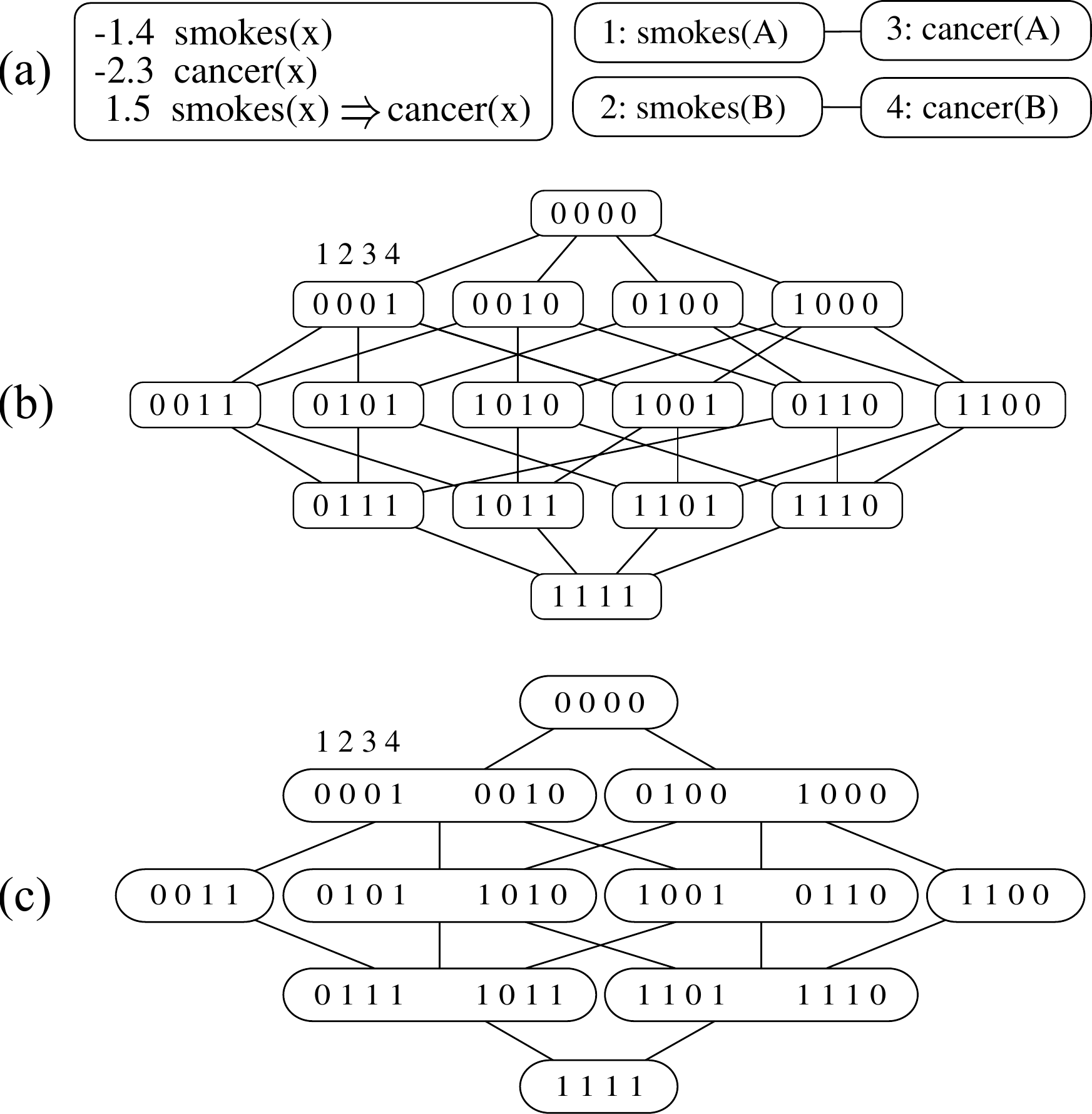}
\caption{Illustration of the orbit partition of the assignment space induced by the renaming automorphism group \{(A\ B), ()\}. A renaming automorphism is a permutation of constants that forms an isomorphism between two graphical models. (a) An MLN with three formulas and the grounding for two constants A and B; (b) the state space of a Gibbs chain with non-zero transitions indicated by lines and without self-arcs; (c) the lumped state space of the quotient Markov chain which has 10 instead of 16 states. The joint distribution can be expressed as a mixture of draws from the orbits.}
\label{fig:lumping}
\end{center}
\end{figure} 
Hence, the \emph{unbiased} Rao-Blackwell estimator integrates out the joint assignments of each orbit. We will prove that the mean squared error of the  Rao-Blackwell estimator is less than or equal to that of the standard estimator. First, however, we want to investigate under what conditions we can efficiently compute the conditional density of equation (2). To this end, we establish a connection between the orbit of the subsequence $\mathbf{\hat{X}}$ under the automorphism group $\mathfrak{G}$ and the orbit partition of the assignment space induced by $\mathfrak{G}$\footnote{Please note the \emph{two different types} of orbit partitions discussed here. One results from $\mathfrak{G}$ acting on the assignment space the other from $\mathfrak{G}$ acting on sequences of random variables.}.

\begin{definition}
Let $\mathbf{X}$ be a finite sequence of random variables with joint distribution $P(\mathbf{X})$, let $\mathfrak{G}$ be an automorphism group of $\mathbf{X}$,  let $\mathbf{\hat{X}}$ be a subsequence of $\mathbf{X}$, let $\mathsf{Val}(\mathbf{X})$ be the assignment space of $\mathbf{X}$, and let $s \in \mathsf{Val}(\mathbf{X})$. The \emph{orbit Hamming weight} of $s$ with respect to the marginal assignment $\mathbf{\hat{X}}=\mathbf{\hat{x}}$ is defined by $$\mathsf{H}_{\mathbf{\hat{X}}=\mathbf{\hat{x}}}^{\mathfrak{G}}(s) := \sum_{\mathbf{A} \in {\mathbf{\hat{X}}}^\mathfrak{G}}\mathbb{I}_{\{s \langle \mathbf{A} \rangle = \mathbf{\hat{x}}\}}.$$
\end{definition}

Based on this definition, we state a lemma which allows us to compute the  density of equation (2) in closed form, without having to enumerate all of the orbit's elements.

\begin{lemma}
\label{lemma-closed-form}
Let $\mathbf{X}$ be a finite sequence of random variables with joint distribution $P(\mathbf{X})$, let $\mathfrak{G}$ be an automorphism group of $\mathbf{X}$,  let $\mathbf{\hat{X}}$ be a subsequence of $\mathbf{X}$, and let $s \in \mathsf{Val}(\mathbf{X})$. Then,  
$$P(\mathbf{\hat{X}}=\mathbf{\hat{x}}\ |\ s^{\mathfrak{G}}) = \frac{\mathsf{H}_{\mathbf{\hat{X}}=\mathbf{\hat{x}}}^{\mathfrak{G}}(s)}{|\mathbf{\hat{X}}^\mathfrak{G}|} = \mathbb{E}[ \hat{\theta}_{N}\ |\ \mathsf{H}_{\mathbf{\hat{X}}=\mathbf{\hat{x}}}^{\mathfrak{G}}(s) ].$$
\end{lemma}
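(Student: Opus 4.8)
The statement comprises two equalities, and the plan is to prove the first by a group-averaging argument and the second by the tower property of conditional expectation.

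For the first equality, I would start from the definition of the conditional orbit density applied to $O=s^{\mathfrak{G}}$, namely $P(\mathbf{\hat{X}}=\mathbf{\hat{x}}\mid s^{\mathfrak{G}})=\frac{1}{|s^{\mathfrak{G}}|}\sum_{\mathbf{x}\in s^{\mathfrak{G}}}\mathbb{I}_{\{\mathbf{x}\langle\mathbf{\hat{X}}\rangle=\mathbf{\hat{x}}\}}$, and replace the average over the orbit by an average over the whole group: for any $f$ on the assignment space, $\frac{1}{|s^{\mathfrak{G}}|}\sum_{\mathbf{x}\in s^{\mathfrak{G}}}f(\mathbf{x})=\frac{1}{|\mathfrak{G}|}\sum_{\mathfrak{g}\in\mathfrak{G}}f(s^{\mathfrak{g}})$, since $\mathfrak{g}\mapsto s^{\mathfrak{g}}$ is $|\mathfrak{G}_s|$-to-one onto the orbit and $|\mathfrak{G}|=|\mathfrak{G}_s|\,|s^{\mathfrak{G}}|$ (orbit-stabilizer, with $\mathfrak{G}_s$ the stabilizer of $s$). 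The identity that drives the computation is that acting on $s$ by $\mathfrak{g}$ and then reading off the subsequence $\mathbf{\hat{X}}$ equals reading off the relabeled subsequence $\mathbf{\hat{X}}^{\mathfrak{g}^{-1}}$ from $s$: $\mathbb{I}_{\{s^{\mathfrak{g}}\langle\mathbf{\hat{X}}\rangle=\mathbf{\hat{x}}\}}=\mathbb{I}_{\{s\langle\mathbf{\hat{X}}^{\mathfrak{g}^{-1}}\rangle=\mathbf{\hat{x}}\}}$ (whether the inverse is present depends only on the action convention, and it washes out). Substituting and reindexing the group sum by $\mathfrak{h}=\mathfrak{g}^{-1}$ gives $\frac{1}{|\mathfrak{G}|}\sum_{\mathfrak{h}\in\mathfrak{G}}\mathbb{I}_{\{s\langle\mathbf{\hat{X}}^{\mathfrak{h}}\rangle=\mathbf{\hat{x}}\}}$; collapsing this back to an orbit average, now for the action of $\mathfrak{G}$ on variable sequences (so $\mathfrak{h}\mapsto\mathbf{\hat{X}}^{\mathfrak{h}}$ is $|\mathfrak{G}_{\mathbf{\hat{X}}}|$-to-one onto $\mathbf{\hat{X}}^{\mathfrak{G}}$), yields $\frac{1}{|\mathbf{\hat{X}}^{\mathfrak{G}}|}\sum_{\mathbf{A}\in\mathbf{\hat{X}}^{\mathfrak{G}}}\mathbb{I}_{\{s\langle\mathbf{A}\rangle=\mathbf{\hat{x}}\}}=\mathsf{H}_{\mathbf{\hat{X}}=\mathbf{\hat{x}}}^{\mathfrak{G}}(s)/|\mathbf{\hat{X}}^{\mathfrak{G}}|$.

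For the second equality, I would first observe that $\mathsf{H}_{\mathbf{\hat{X}}=\mathbf{\hat{x}}}^{\mathfrak{G}}$ is constant on each orbit of the assignment space --- the same relabeling identity gives $\mathsf{H}(s^{\mathfrak{g}})=\mathsf{H}(s)$ because $\mathbf{A}\mapsto\mathbf{A}^{\mathfrak{g}^{-1}}$ only permutes $\mathbf{\hat{X}}^{\mathfrak{G}}$ --- so the orbit of a sample point is finer information than its orbit Hamming weight. Working with a single draw $S\sim P$ (the claim for $\hat{\theta}_N$ then follows term by term, since it is an average of such indicators), $\hat{\theta}_1=\mathbb{I}_{\{S\langle\mathbf{\hat{X}}\rangle=\mathbf{\hat{x}}\}}$, and because $P$ is constant on each orbit $O$, conditioning $S$ on $O$ makes it uniform on $O$; hence $\mathbb{E}[\hat{\theta}_1\mid S^{\mathfrak{G}}]=P(\mathbf{\hat{X}}=\mathbf{\hat{x}}\mid S^{\mathfrak{G}})=\mathsf{H}_{\mathbf{\hat{X}}=\mathbf{\hat{x}}}^{\mathfrak{G}}(S)/|\mathbf{\hat{X}}^{\mathfrak{G}}|$ by the first equality. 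Applying the tower property with the coarser conditioning, $\mathbb{E}[\hat{\theta}_1\mid\mathsf{H}(S)]=\mathbb{E}\big[\,\mathbb{E}[\hat{\theta}_1\mid S^{\mathfrak{G}}]\ \big|\ \mathsf{H}(S)\,\big]=\mathbb{E}\big[\,\mathsf{H}(S)/|\mathbf{\hat{X}}^{\mathfrak{G}}|\ \big|\ \mathsf{H}(S)\,\big]=\mathsf{H}(s)/|\mathbf{\hat{X}}^{\mathfrak{G}}|$, the last equality because $\mathsf{H}(S)/|\mathbf{\hat{X}}^{\mathfrak{G}}|$ is already a function of $\mathsf{H}(S)$.

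Most of this is bookkeeping with the two orbit-stabilizer collapses; the main obstacle is keeping the group-action conventions straight --- in particular the placement of $\mathfrak{g}^{-1}$ in $s^{\mathfrak{g}}\langle\mathbf{\hat{X}}\rangle$ versus $s\langle\mathbf{\hat{X}}^{\mathfrak{g}^{-1}}\rangle$, and keeping the stabilizer of $s$ distinct from that of $\mathbf{\hat{X}}$ --- and being explicit that the two orbit partitions in play (one on assignments, one on variable sequences) interact exactly through this relabeling identity. Since the argument symmetrizes over the whole group, the final formula is insensitive to which convention is chosen.
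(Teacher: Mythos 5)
Your proof is correct and follows essentially the same route as the paper's: the first equality is obtained by replacing the orbit average with a group average and applying the orbit--stabilizer theorem twice (once for the stabilizer of $s$, once for the stabilizer of $\mathbf{\hat{X}}$), which is exactly the paper's counting argument with the relabeling identity made explicit. Your tower-property argument for the second equality is more detailed than the paper, which simply asserts that identity after establishing sufficiency, but it is the same underlying idea.
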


The following example demonstrates the application of the lemma to the special case of single-variable marginal density estimation for the MLN in Figure~\ref{fig:lumping}.

\begin{example}
Let us assume we want to estimate the marginal density $P($smokes(A)=$1)$ of the MLN in Figure~\ref{fig:lumping}(a). Since $\mathfrak{G}$ = \{(smokes(A) smokes(B))(cancer(A) cancer(B)), ()\} we have that $\langle$smokes(A)$\rangle^\mathfrak{G}$=\{$\langle$smokes(A)$\rangle, \langle$smokes(B)$\rangle\}$. Given the sample point $s = \langle 1, 0, 1, 0\rangle$ we have that $\mathsf{H}_{\langle\mbox{smokes(A)}\rangle=\langle 1 \rangle}^{\mathfrak{G}}(s) = 1$ and $P($smokes(A)=$1\ |\ s^{\mathfrak{G}}) = \frac{1}{2}$.
\end{example}

Thus, given a sample point $s$, the marginal density conditioned on an orbit of the assignment space is computable in closed form using the \emph{orbit Hamming weight} of $s$ with respect to the marginal assignment since it is a \emph{sufficient statistic} for the marginal density. 
If the probabilistic model exhibits symmetries, then the Rao-Blackwell estimator's MSE is less than or equal to that of the standard estimator. 

\begin{theorem}
\label{theorem-rao-blackwell}
Let $\mathbf{X}$ be a finite sequence of random variables with joint distribution $P(\mathbf{X})$, let $\mathfrak{G}$ be an automorphism group of $\mathbf{X}$ given by a generating set $R$, let $\mathbf{\hat{X}}$ be a subsequence of $\mathbf{X}$, and let $\theta := P(\mathbf{\hat{X}}=\mathbf{\hat{x}})$ be the marginal density to be estimated. The Rao-Blackwell estimator $\hat{\theta}^{\mathtt{rb}}_{N}$ has the following properties:
\begin{enumerate}
{\setlength\itemindent{12pt} \item[(a)] Its worst-case time complexity is $O(R|\mathbf{\hat{X}}^\mathfrak{G}|\hspace{-0.5mm} +\hspace{-0.5mm} N |\mathbf{\hat{X}}^\mathfrak{G}|)$; }
{\setlength\itemindent{12pt} \item[(b)] $\mbox{MSE}[\hat{\theta}^{\mathtt{rb}}_{N}] \leq \mbox{MSE}[\hat{\theta}_N]$. }
\end{enumerate} 
The inequality of (b) is strict if there exists a joint assignment $s$ with non-zero density and $0 < \mathsf{H}_{\mathbf{\hat{X}}=\mathbf{\hat{x}}}^{\mathfrak{G}}(s) <|\mathbf{\hat{X}}^{\mathfrak{G}}| > 1$.
\end{theorem}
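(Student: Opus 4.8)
I would prove the three assertions separately. For the complexity bound (a), I would describe two phases. Phase one computes the orbit $\mathbf{\hat{X}}^\mathfrak{G}$ of the variable sequence $\mathbf{\hat{X}}$ by the standard orbit algorithm: maintain a set of discovered sequences initialised to $\{\mathbf{\hat{X}}\}$, and repeatedly take an undeveloped sequence and apply each of the $|R|$ generators, inserting any image not yet present. Each of the $|\mathbf{\hat{X}}^\mathfrak{G}|$ sequences is developed exactly once and each development applies $|R|$ generators, so, treating the length of $\mathbf{\hat{X}}$ as a constant (the regime of interest, e.g.\ single-variable marginals), phase one runs in $O(R\,|\mathbf{\hat{X}}^\mathfrak{G}|)$. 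Phase two processes the sample: for each $s_i$ one scans the precomputed orbit and counts the sequences $\mathbf{A}\in\mathbf{\hat{X}}^\mathfrak{G}$ with $s_i\langle\mathbf{A}\rangle=\mathbf{\hat{x}}$, i.e.\ computes $\mathsf{H}_{\mathbf{\hat{X}}=\mathbf{\hat{x}}}^{\mathfrak{G}}(s_i)$, in $O(|\mathbf{\hat{X}}^\mathfrak{G}|)$ time; by Lemma~\ref{lemma-closed-form} dividing by $|\mathbf{\hat{X}}^\mathfrak{G}|$ gives $P(\mathbf{\hat{X}}=\mathbf{\hat{x}}\mid s_i^\mathfrak{G})$, and averaging these $N$ numbers gives $\hat{\theta}^{\mathtt{rb}}_{N}$. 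Phase two costs $O(N\,|\mathbf{\hat{X}}^\mathfrak{G}|)$; summing the two phases yields the bound. The point worth stressing is that only the (typically small) orbit of the variable sequence is enumerated, never the exponential orbit of the assignment space — which is exactly what Lemma~\ref{lemma-closed-form} buys.

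For the MSE inequality (b), I would apply the Rao-Blackwell theorem with statistic $T := \langle s_1^\mathfrak{G},\ldots,s_N^\mathfrak{G}\rangle$, the tuple of assignment-space orbits hit by the sample. Since $\hat{\theta}_N\in[0,1]$ we have $\mathbb{E}[\hat{\theta}_N^2]<\infty$; and $T$ is sufficient for $\theta$ because $P$ is constant on every orbit, so conditional on $T$ each $s_i$ is uniform on $s_i^\mathfrak{G}$ and the conditional law of the sample is free of the model parameters. Treating the $s_i$ as i.i.d.\ draws from $P$, so that $\mathbb{E}[\mathbb{I}_{\{s_i\langle\mathbf{\hat{X}}\rangle=\mathbf{\hat{x}}\}}\mid T]=\mathbb{E}[\mathbb{I}_{\{s_i\langle\mathbf{\hat{X}}\rangle=\mathbf{\hat{x}}\}}\mid s_i^\mathfrak{G}]$, linearity of conditional expectation and Lemma~\ref{lemma-closed-form} give
$$\mathbb{E}[\hat{\theta}_N\mid T]=\frac{1}{N}\sum_{i=1}^{N}\mathbb{E}[\,\mathbb{I}_{\{s_i\langle\mathbf{\hat{X}}\rangle=\mathbf{\hat{x}}\}}\mid s_i^\mathfrak{G}\,]=\frac{1}{N}\sum_{i=1}^{N}P(\mathbf{\hat{X}}=\mathbf{\hat{x}}\mid s_i^\mathfrak{G})=\hat{\theta}^{\mathtt{rb}}_{N}.$$
Hence $\hat{\theta}^{\mathtt{rb}}_{N}$ is exactly the Rao-Blackwellized version of $\hat{\theta}_N$ relative to $T$, and the Rao-Blackwell theorem gives $\mbox{MSE}[\hat{\theta}^{\mathtt{rb}}_{N}]\le\mbox{MSE}[\hat{\theta}_N]$; equivalently, since conditioning preserves the mean, $\mbox{MSE}[\hat{\theta}_N]-\mbox{MSE}[\hat{\theta}^{\mathtt{rb}}_{N}]=\mathbb{E}[\Var(\hat{\theta}_N\mid T)]\ge 0$.

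For strictness I would show the gap $\mathbb{E}[\Var(\hat{\theta}_N\mid T)]$ is strictly positive under the stated hypothesis. By i.i.d.\ sampling the indicators $\mathbb{I}_{\{s_i\langle\mathbf{\hat{X}}\rangle=\mathbf{\hat{x}}\}}$ are conditionally independent given $T$, each (by Lemma~\ref{lemma-closed-form}) a $\mathrm{Bernoulli}(q_i)$ variable with $q_i:=\mathsf{H}_{\mathbf{\hat{X}}=\mathbf{\hat{x}}}^{\mathfrak{G}}(s_i)/|\mathbf{\hat{X}}^\mathfrak{G}|$, so $\Var(\hat{\theta}_N\mid T)=N^{-2}\sum_{i=1}^{N}q_i(1-q_i)$, which is strictly positive as soon as some sampled $s_i$ has $0<\mathsf{H}_{\mathbf{\hat{X}}=\mathbf{\hat{x}}}^{\mathfrak{G}}(s_i)<|\mathbf{\hat{X}}^\mathfrak{G}|$. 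Under the hypothesis there is an $s$ with $P(s)>0$ and $0<\mathsf{H}_{\mathbf{\hat{X}}=\mathbf{\hat{x}}}^{\mathfrak{G}}(s)<|\mathbf{\hat{X}}^\mathfrak{G}|$ (which forces $|\mathbf{\hat{X}}^\mathfrak{G}|>1$); since the orbit Hamming weight depends only on $s^\mathfrak{G}$ (Lemma~\ref{lemma-closed-form} expresses it through the orbit alone), every element of $s^\mathfrak{G}$ inherits this property, and $P(s^\mathfrak{G})\ge P(s)>0$, so with positive probability some $s_i\in s^\mathfrak{G}$. Therefore $\mathbb{E}[\Var(\hat{\theta}_N\mid T)]>0$ and the inequality in (b) is strict.

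The step I expect to be most delicate is this strictness clause: one has to settle the right reading of ``$\hat{\theta}^{\mathtt{rb}}_{N}$ is a function of $\hat{\theta}_N$'' in the Rao-Blackwell theorem and then exhibit concretely the within-orbit information that $T$ discards — the $\mathrm{Bernoulli}$ reduction above is what makes $\Var(\hat{\theta}_N\mid T)>0$ transparent. A secondary subtlety is the sampling model: the identity $\mathbb{E}[\hat{\theta}_N\mid T]=\hat{\theta}^{\mathtt{rb}}_{N}$ and the conditional independence used for strictness lean on i.i.d.\ sampling from $P$; for sample points produced by a Markov chain the same conclusion is better reached through the quotient-chain machinery than by this direct argument.
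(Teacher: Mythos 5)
Your proposal is correct and follows essentially the same route as the paper: the orbit of $\mathbf{\hat{X}}$ is enumerated once via the generators and scanned per sample point for part (a), and part (b) is obtained by exhibiting $\hat{\theta}^{\mathtt{rb}}_{N}$ as $\mathbb{E}[\hat{\theta}_N \mid T]$ for a sufficient statistic via Lemma~\ref{lemma-closed-form} and invoking the Rao--Blackwell theorem. Your only departures are cosmetic improvements --- conditioning on the tuple of assignment-space orbits rather than on the orbit Hamming weights (which yields the same conditional expectation) and making the strictness claim explicit through the decomposition $\mbox{MSE}[\hat{\theta}_N]-\mbox{MSE}[\hat{\theta}^{\mathtt{rb}}_{N}]=\mathbb{E}[\Var(\hat{\theta}_N\mid T)]=\mathbb{E}\bigl[N^{-2}\sum_i q_i(1-q_i)\bigr]>0$, where the paper simply asserts that $\hat{\theta}_N$ is not a function of the statistic.
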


For single-variable density estimation the worst-case time complexity of the Rao-Blackwell estimator is $O(R|\mathbf{X}| + N|\mathbf{X}|)$ and, therefore, linear both in the number of variables and the number of sample points. For most symmetric models, the inequality of Theorem~\ref{theorem-rao-blackwell}(b) is strict and the Rao-Blackwell estimator  outperforms the standard estimator, a behavior we will verify empirically.

Please note that in the special case of single-variable marginal density estimation the RB estimator is identical to the estimator that averages the identically distributed variables located in the same orbit. The advantages of utilizing the Rao-Blackwell theory are (1) it directly provides conditions for which the inequality of Theorem~\ref{theorem-rao-blackwell}(b) is strict; (2) it generalizes the single-variable case to marginals spanning multiple variables; (3) it allows us to investigate the \emph{completeness} of an estimator with respect to a given automorphism group; and (4) it provides the link to the quotient Markov chain in the MCMC setting and its superior convergence behavior presented in the following section.


The Rao-Blackwell estimator is unbiased if the drawn sample points are independent. Since it is often only practical to collect sample points from a Markov chain, the bias for a finite number of $N$ points will depend on the chain's mixing behavior. We will show that if there are non-trivial model symmetries and if we are using the Rao-Blackwell estimator, we only need to worry about the mixing behavior of the Markov chain whose state space is the orbit partition.

\section{Symmetry-Aware MCMC}
\label{symmetry-aware-section}

Whenever we collect sample points from a Markov chain, the efficiency of an estimator is influenced  by (a) the mixing behavior of the Markov chain and (b) the variance of the estimator under the assumption that the Markov chain has reached stationarity, that is, the asymptotic variance \cite{neal:2004}. That the Rao-Blackwell estimator's asymptotic variance is at least as low as that of the standard estimator is a corollary of Theorem~\ref{theorem-rao-blackwell}. We show that the same is true for the bias that is caused by the fact that we collect a finite number of sample points from Markov chains which never \emph{exactly} reach stationarity.

A lumping of a Markov chain is a partition of its state space which is possible under certain conditions on the transition probabilities of the original Markov chain~\cite{buchholz:1994,derisavi:2003}. 

\begin{definition}
Let $\mathcal{M}$ be an ergodic Markov chain with transition matrix $Q$, stationary distribution $\pi$, and state space $\Omega$, and let $\mathcal{C} = \{C_1, ..., C_n\}$ be a partition of the state space. If for all $C_i, C_j \in \mathcal{C}$ and all ${s_i}', {s_i}'' \in C_i$ $$Q'(C_i, C_j) := \sum_{{s_j} \in C_j} Q({s_i}',{s_j})=\sum_{{s_j} \in C_j} Q({s_i}'', {s_j})$$ then  we say that $\mathcal{M}$ is ordinary lumpable with respect to $\mathcal{C}$. If, in addition, $\pi(s_i')=\pi(s_i'')$ for all $s_i', s_i'' \in C_i$ and all $C_i \in \mathcal{C}$ then $\mathcal{M}$ is exactly lumpable with respect to $\mathcal{C}$. The Markov chain $\mathcal{M'}$ with state space $\mathcal{C}$ and transition matrix $Q'$ is called the quotient chain of $\mathcal{M}$ with respect to $\mathcal{C}$.
\end{definition}

Every finite ergodic Markov chain is exactly lumpable with respect to an orbit partition of its state space. The following theorem states this and the convergence behavior of the quotient Markov chain in relation to the original Markov chain (cf. \cite{boyd:2005}).

\begin{proposition}
\label{proposition-lumpable}
Let $\mathcal{M}$ be an ergodic Markov chain and let $\mathcal{O}$ be an orbit partition of its state space. Then, the Markov chain $\mathcal{M}$ is exactly lumpable with respect to $\mathcal{O}$. If $\mathcal{M}$ is reversible, then the quotient Markov chain $\mathcal{M'}$ with respect to $\mathcal{O}$ is also reversible. Moreover, the mixing time of $\mathcal{M'}$ is smaller than or equal to the mixing time of $\mathcal{M}$.
\end{proposition}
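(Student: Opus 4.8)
The plan is to verify the three assertions in order: exact lumpability with respect to the orbit partition $\mathcal{O}$, preservation of reversibility on the quotient, and the mixing-time comparison.

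\emph{Step 1: Exact lumpability.} Let $\mathfrak{G}$ be the group inducing $\mathcal{O}$; since $\mathcal{M}$ is ergodic on $\Omega$, I would take $\mathfrak{G}$ to be (a subgroup of) the automorphism group of the transition matrix, i.e. $Q(s,t)=Q(s^{\mathfrak g},t^{\mathfrak g})$ for all $s,t\in\Omega$ and $\mathfrak g\in\mathfrak G$. Fix an orbit $O_i\ni s',s''$, so $s''=(s')^{\mathfrak g}$ for some $\mathfrak g\in\mathfrak G$, and fix another orbit $O_j$. Then
$$\sum_{t\in O_j}Q(s',t)=\sum_{t\in O_j}Q\big((s')^{\mathfrak g},t^{\mathfrak g}\big)=\sum_{u\in O_j^{\mathfrak g}}Q(s'',u)=\sum_{u\in O_j}Q(s'',u),$$
where the middle step reindexes by $u=t^{\mathfrak g}$ and the last step uses that $O_j$ is $\mathfrak G$-invariant (orbits are closed under the group action). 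This is exactly the ordinary lumpability condition. For exact lumpability I additionally need $\pi(s')=\pi(s'')$ on each orbit; this follows because $\pi$ is the unique stationary distribution and $\pi\circ\mathfrak g$ is also stationary (it satisfies $\sum_s (\pi\circ\mathfrak g)(s)Q(s,t) = \sum_s \pi(s^{\mathfrak g})Q(s,t) = \sum_s \pi(s)Q(s^{\mathfrak g^{-1}},t) = \sum_s \pi(s) Q(s,t^{\mathfrak g}) = \pi(t^{\mathfrak g}) = (\pi\circ\mathfrak g)(t)$), hence $\pi\circ\mathfrak g=\pi$, so $\pi$ is constant on orbits.

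\emph{Step 2: Reversibility of the quotient.} Assume $\mathcal{M}$ is reversible: $\pi(s)Q(s,t)=\pi(t)Q(t,s)$. Write $\pi'(O_i):=\pi(O_i)=|O_i|\,\pi(s')$ for any $s'\in O_i$ (well-defined by Step 1). Then for orbits $O_i,O_j$, picking $s'\in O_i$,
$$\pi'(O_i)Q'(O_i,O_j)=|O_i|\,\pi(s')\sum_{t\in O_j}Q(s',t)=|O_i|\sum_{t\in O_j}\pi(t)Q(t,s')= \sum_{t\in O_j}\pi(t)\cdot |O_i| Q(t,s').$$
Summing ordinary-lumpability-style over $O_i$ and using that $\pi(t)$ and $|O_i|Q(t, s')$ are constant as $t$ ranges over $O_j$ (the latter because $\sum_{s\in O_i}Q(t,s)$ is $\mathfrak G$-invariant in $t$ and by orbit-transitivity equals $|O_i|Q(t,s')$), this rearranges to $\pi'(O_j)Q'(O_j,O_i)$; symmetry in $i,j$ gives the detailed-balance equation for $\mathcal{M}'$.

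\emph{Step 3: Mixing time comparison.} This is the step I expect to be the main obstacle, and I would handle it by a projection/contraction argument following \cite{boyd:2005}. The key observation is that $d_{\mathsf{tv}}$ can only decrease under the lumping map $\phi:\Omega\to\mathcal{O}$: for any $t$ and any starting orbit $O_x$ with representative $x$,
$$d_{\mathsf{tv}}\big((Q')^t_{O_x},\pi'\big)=\tfrac12\sum_{O\in\mathcal O}\Big|\sum_{y\in O}\big(Q^t(x,y)-\pi(y)\big)\Big|\le \tfrac12\sum_{O\in\mathcal O}\sum_{y\in O}\big|Q^t(x,y)-\pi(y)\big|=d_{\mathsf{tv}}(Q^t_x,\pi),$$
using that exact lumpability implies $(Q')^t(O_x,O)=\sum_{y\in O}Q^t(x,y)$ (lumped powers are powers of the lumped matrix) and $\pi'(O)=\sum_{y\in O}\pi(y)$. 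Since every state of $\mathcal{M}'$ is the image $\phi(x)$ of some $x\in\Omega$, taking the max over starting states on both sides and then over $t\ge T$ shows that any $T$ that works for $\mathcal{M}$ works for $\mathcal{M}'$, hence $\tau_{\mathcal M'}(\varepsilon)\le\tau_{\mathcal M}(\varepsilon)$. The subtle point to get right is the identity $(Q')^t=(Q^t)'$ — that lumping commutes with taking powers — which holds precisely because ordinary lumpability makes $\phi$ a Markov homomorphism; I would prove it by induction on $t$, using the lumpability identity at each step.
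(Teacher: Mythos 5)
The paper never actually proves Proposition~\ref{proposition-lumpable}; it defers to the cited reference (Boyd et al.), so your argument has to stand on its own. Steps 1 and 3 do: Step 1 correctly derives ordinary lumpability from $Q(s,t)=Q(s^{\mathfrak g},t^{\mathfrak g})$ by reindexing over the $\mathfrak G$-invariant orbit $O_j$, and correctly obtains $\pi(s^{\mathfrak g})=\pi(s)$ from uniqueness of the stationary distribution, which is exactly the paper's definition of exact lumpability. Step 3's contraction of total variation under the lumping map, combined with $(Q')^t=(Q^t)'$ (proved by induction using ordinary lumpability), gives the mixing-time inequality precisely for the paper's definition of $\tau(\varepsilon)$, and as a bonus does not need reversibility at all.

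Step 2, however, rests on a false intermediate claim. You assert that $|O_i|\,Q(t,s')$ is constant as $t$ ranges over $O_j$, justified by the statement that $\sum_{s\in O_i}Q(t,s)$ ``by orbit-transitivity equals $|O_i|Q(t,s')$.'' Neither holds: orbit-transitivity gives $Q(t,s)=Q(t^{\mathfrak g},s^{\mathfrak g})$, which moves $t$ as well as $s$, so the individual entries $Q(t,s)$, $s\in O_i$, need not coincide. Concretely, for the simple random walk on a $6$-cycle with $\mathfrak G$ generated by rotation by two steps, the orbits are $O_i=\{1,3,5\}$ and $O_j=\{2,4,6\}$; here $Q(4,1)=0$ while $\sum_{s\in O_i}Q(4,s)=1$, and $Q(t,1)$ takes the values $1/2,\,0,\,1/2$ as $t$ ranges over $O_j$, so it is not constant. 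The facts you actually need are (i) the \emph{column} sum $\sum_{t\in O_j}Q(t,s)$ is constant as $s$ ranges over $O_i$ (same reindexing trick applied to the first argument of $Q$), and (ii) the double count $\sum_{s\in O_i}\sum_{t\in O_j}Q(t,s)=|O_j|\,Q'(O_j,O_i)=|O_i|\sum_{t\in O_j}Q(t,s')$. Since $\pi(t)=\pi'(O_j)/|O_j|$ is constant on $O_j$, your displayed line then becomes $\frac{\pi'(O_j)}{|O_j|}\cdot|O_i|\sum_{t\in O_j}Q(t,s')=\pi'(O_j)\,Q'(O_j,O_i)$, which is the detailed-balance identity you want. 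The conclusion of Step 2 is therefore recoverable, but the justification as written would fail.
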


\begin{example}
Figure~\ref{fig:lumping}(b) depicts the state space of the Gibbs chain for the MLN shown in Figure~\ref{fig:lumping}(a). The constants renaming automorphism group  \{(A B), ()\} acting on the sets of constants leads to the automorphism group \{(smokes(A) smokes(B))(cancer(A) cancer(B)), ()\} on the ground level. This permutation group acting on the state space of the Gibbs chain induces an orbit partition which is the state space of the quotient Markov chain (see Figure~\ref{fig:lumping}(c)).
\end{example}

\begin{figure*}[t!]
     \begin{center}
        \subfigure[The asthma-smokes-cancer MLN with $50$ people and $10\%$ evidence.]{%
            \label{fig:first}
            \includegraphics[width=0.285\textwidth]{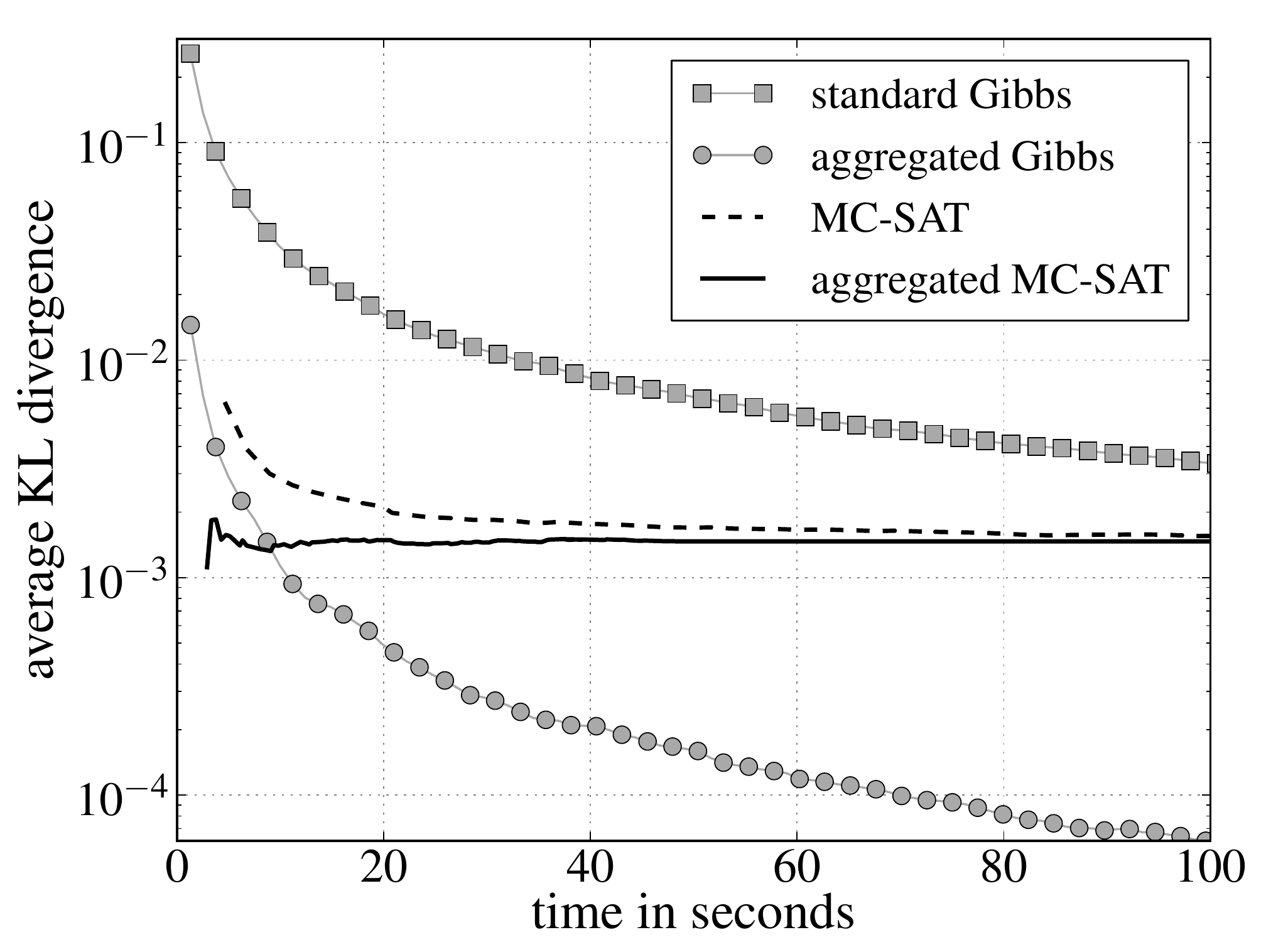}
        }%
        \hspace{5mm}
        \subfigure[The smokes-cancer MLN with $50$ people and $10\%$ evidence.]{%
           \label{fig:second}
           \includegraphics[width=0.285\textwidth]{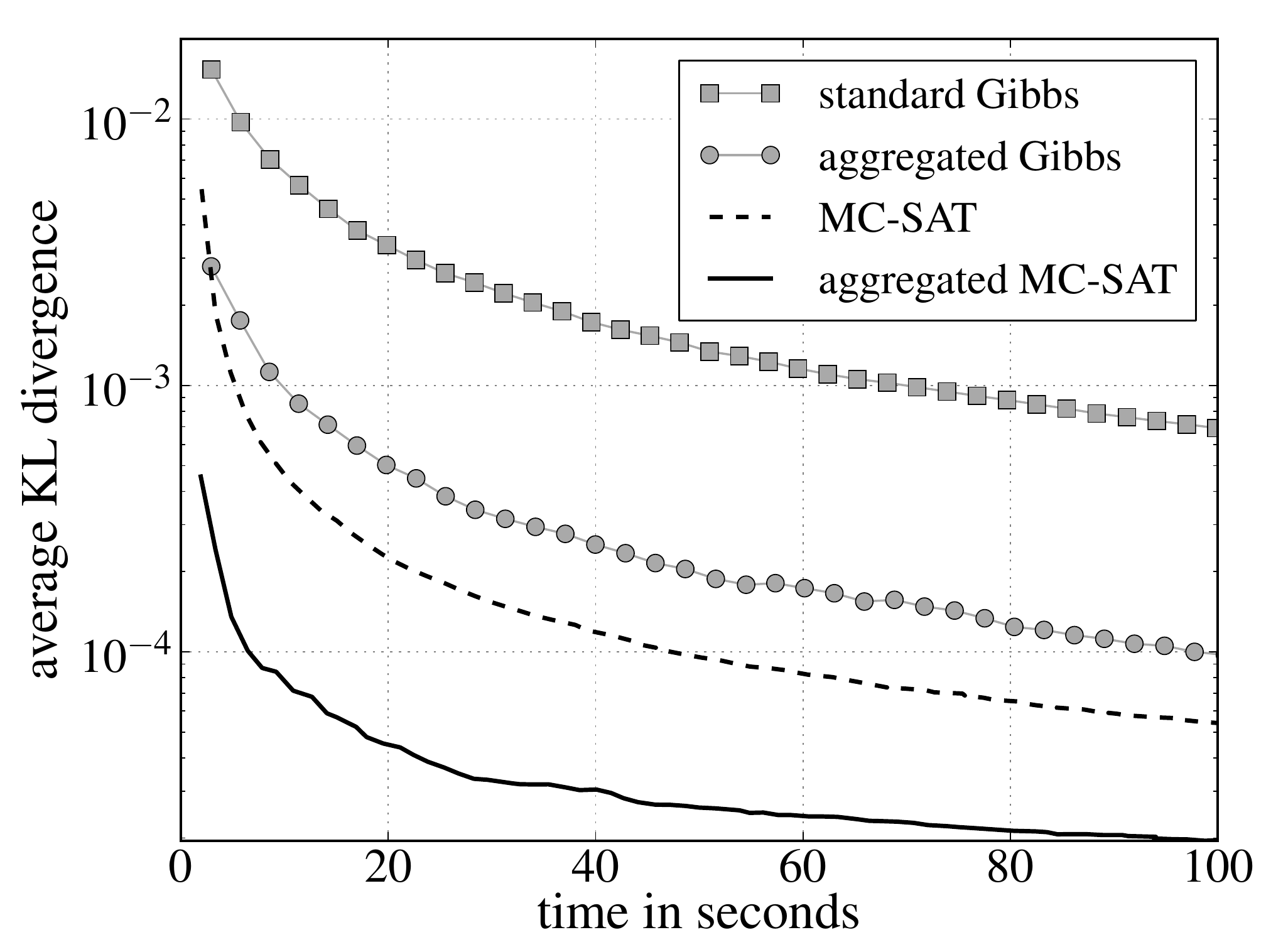}
        }
        \hspace{5mm}
        \subfigure[The smokes-cancer MLN with $50$ people, no evidence, and transitivity.]{%
            \label{fig:third}
            \includegraphics[width=0.285\textwidth]{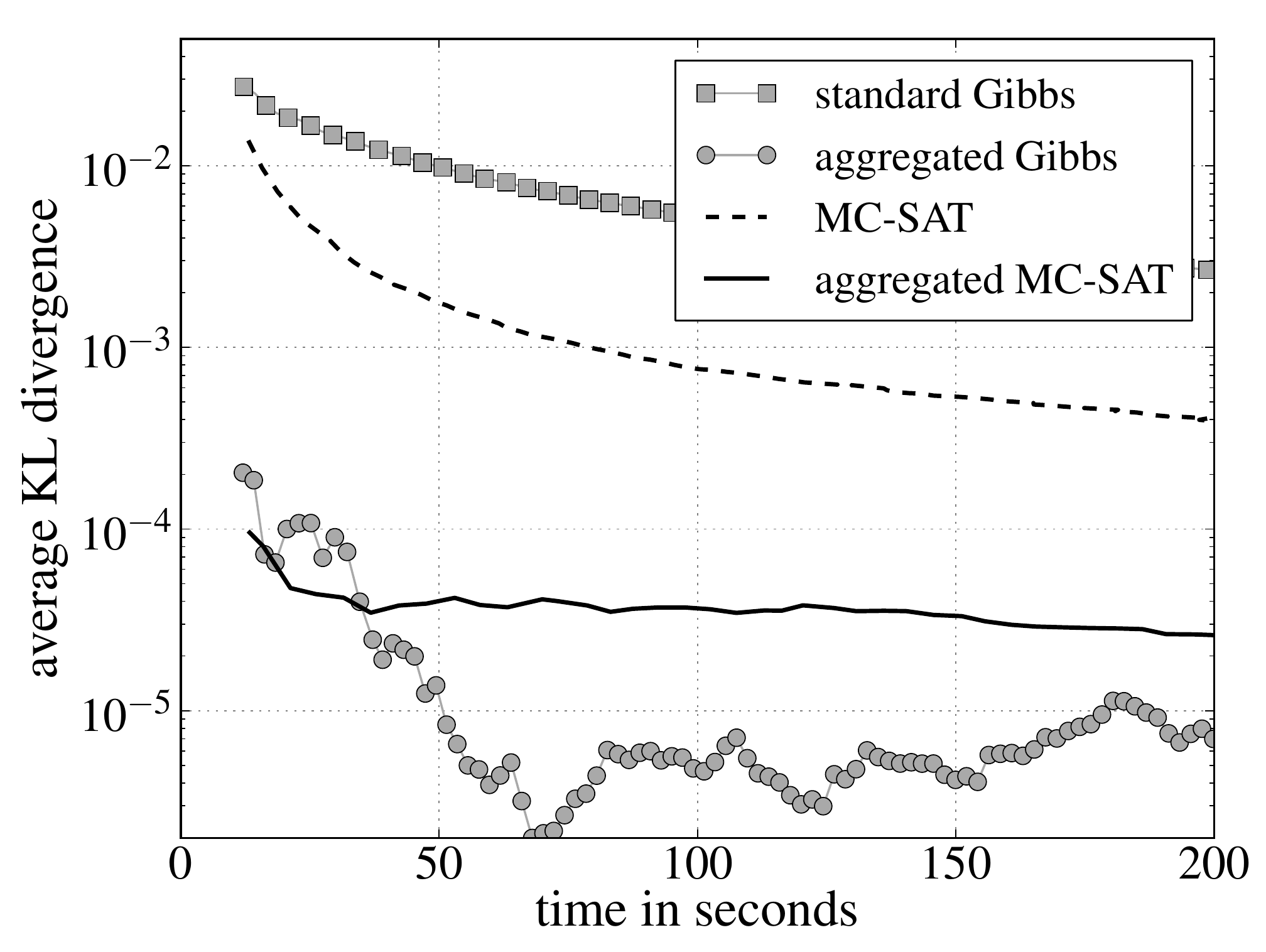}
        }
        \\ 
        \subfigure[The smokes-cancer MLN with $50$ people, $10\%$ evidence, and transitivity.]{%
            \label{fig:fourth}
            \includegraphics[width=0.285\textwidth]{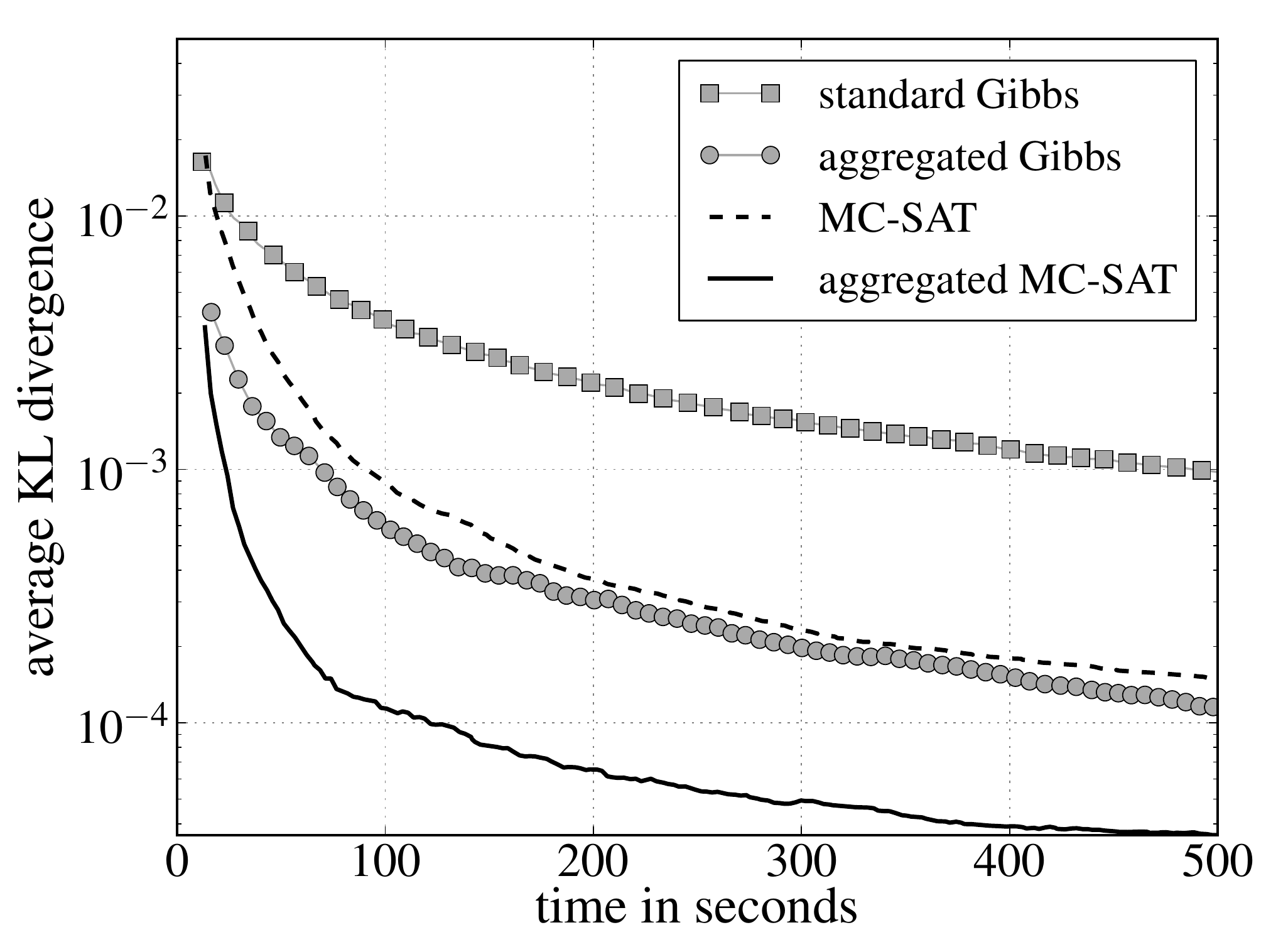}
        }%
        \hspace{5mm}
        \subfigure[The $2$-coloring $100\times 100$ grid model with weight $0.2$.]{%
            \label{fig:fifth}
            \includegraphics[width=0.285\textwidth]{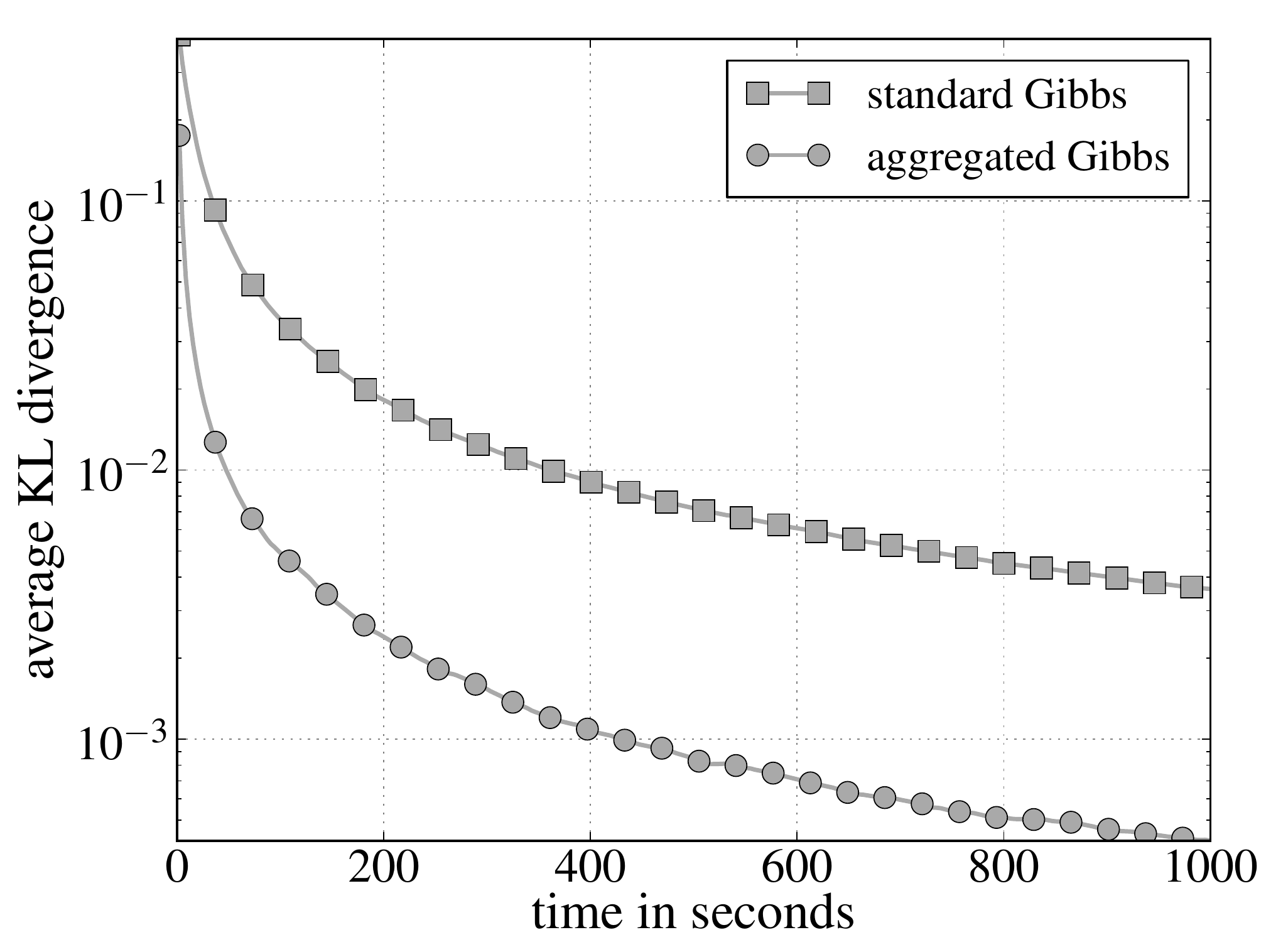}
        }%
        \hspace{5mm}
        \subfigure[The $2$-coloring  $100\times 100$ grid model with hard constraints.]{%
            \label{fig:sixth}
            \includegraphics[width=0.285\textwidth]{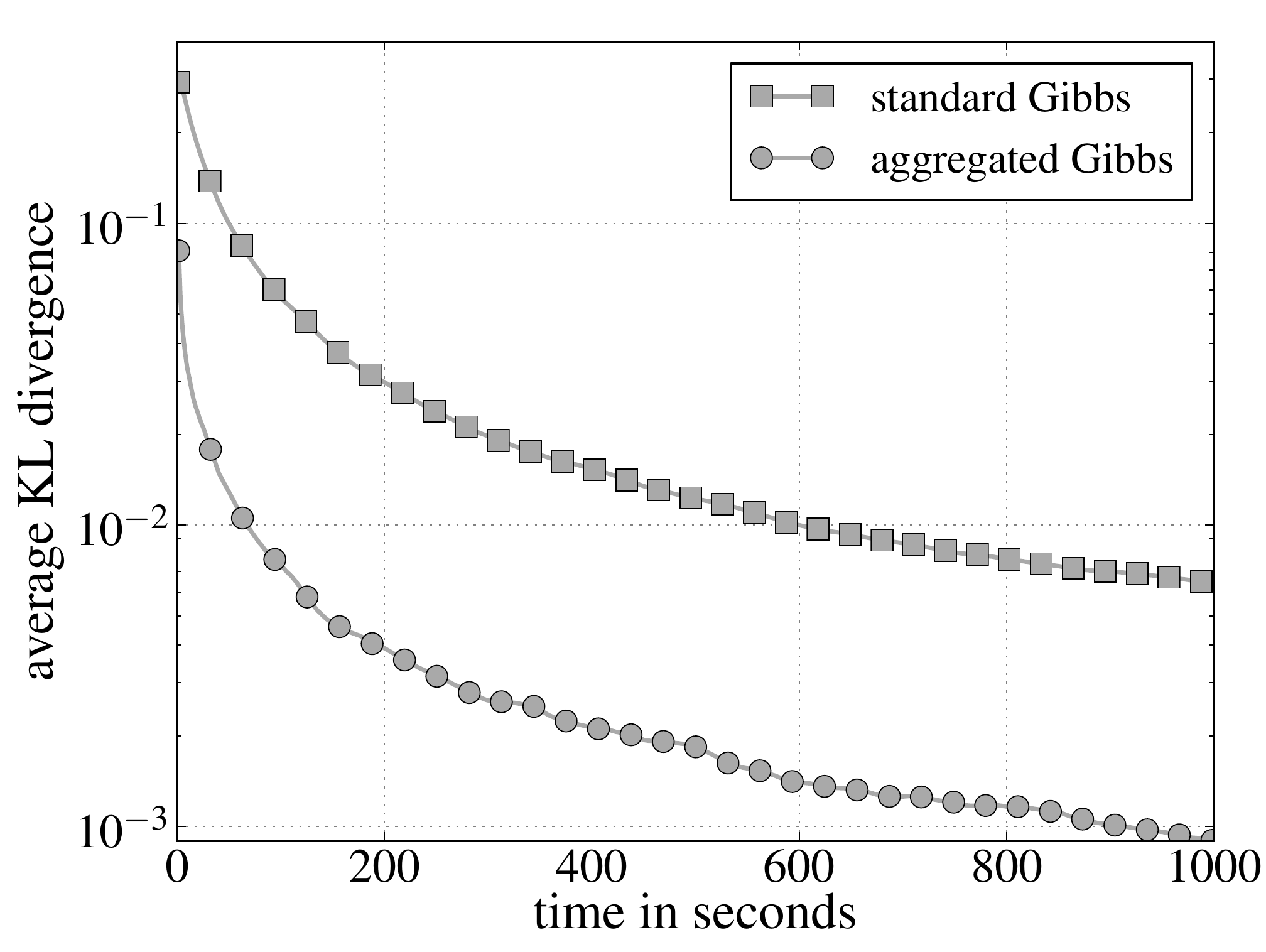}
        }%
    \caption{\label{fig:MLNs-comparative}Plots of average KL divergence versus time in seconds of the two MCMC algorithms with the standard estimator (standard) and the Rao-Blackwell estimator (aggregated) for various probabilistic models.}      
\end{center}
\end{figure*}

The explicit construction of the state space of a quotient Markov chain is  intractable. Given an automorphism group $\mathfrak{G}$, merely counting the number of equivalence classes of the orbit partition of the assignment space induced by $\mathfrak{G}$ is known to be a $\#$P-complete problem~\cite{goldberg:2001}. 
Nevertheless, if the Rao-Blackwell estimator is utilized, one can draw the sample points  from the original Markov chain while analyzing the convergence behavior of the quotient Markov chain of the original chain. 

\begin{theorem}
\label{theorem-mcmc}
\looseness=-1 Let $\mathbf{X}$ be a finite sequence of random variables with joint distribution $P(\mathbf{X})$, let $\mathcal{M}$ be an ergodic Markov chain with stationary distribution $P$, and let $\mathcal{O}$ be an orbit partition of $\mathcal{M}$'s state space. 
Let $\hat{\theta}^{\mathtt{rb}}_{N}$ be the Rao-Blackwell estimator for $N$ sample points $s_{T+1}, ..., s_{T+N}$ collected from $\mathcal{M}$, after discarding the first $T$ sample points. 
Then, $|\mathtt{bias}(\hat{\theta}^{\mathtt{rb}}_{N})| \leq \epsilon$ if  $T \geq \tau'(\epsilon)$, where $\tau'(\epsilon)$ is the mixing time of the quotient Markov chain of $\mathcal{M}$ with respect to $\mathcal{O}$.
\end{theorem}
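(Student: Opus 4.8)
The plan is to observe that, by Lemma~\ref{lemma-closed-form}, the $i$-th summand $P(\mathbf{\hat{X}}=\mathbf{\hat{x}}\mid {s_i}^{\mathfrak{G}})$ of the Rao-Blackwell estimator depends on the sample point $s_i$ only through its orbit ${s_i}^{\mathfrak{G}}$. Hence $\hat{\theta}^{\mathtt{rb}}_{N}$ is a function of the sequence of \emph{orbits} visited by $\mathcal{M}$, i.e.\ of a length-$N$ trajectory of the quotient chain $\mathcal{M}'$ started $T$ steps in. I would make this precise by defining $\bar{g}\colon\mathcal{O}\to[0,1]$ by $\bar{g}(O):=P(\mathbf{\hat{X}}=\mathbf{\hat{x}}\mid O)$, so that $P(\mathbf{\hat{X}}=\mathbf{\hat{x}}\mid s^{\mathfrak{G}})=\bar{g}(s^{\mathfrak{G}})$ for every $s$, with $0\le\bar{g}\le 1$ since $\bar{g}(O)$ is a probability. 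By Proposition~\ref{proposition-lumpable}, $\mathcal{M}$ is exactly lumpable with respect to $\mathcal{O}$, so the orbit-label process is itself a Markov chain with transition matrix $Q'$; a standard consequence is the pushforward identity $\sum_{s\in O}Q^t(x,s)=(Q')^t(x^{\mathfrak{G}},O)$ for all states $x$, orbits $O$, and $t\ge 0$, and the stationary distribution $\pi'$ of $\mathcal{M}'$ satisfies $\pi'(O)=P(O)$.

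With this in place, fix a starting state $x$ of $\mathcal{M}$ and write $\bar{x}:=x^{\mathfrak{G}}$. Unfolding the expectation and using the pushforward identity,
\[
\mathbb{E}[\hat{\theta}^{\mathtt{rb}}_{N}]=\frac{1}{N}\sum_{i=1}^{N}\sum_{s}Q^{T+i}(x,s)\,\bar{g}(s^{\mathfrak{G}})=\frac{1}{N}\sum_{i=1}^{N}\sum_{O\in\mathcal{O}}\bar{g}(O)\,(Q')^{T+i}(\bar{x},O),
\]
whereas the mixture re-parameterization recalled earlier gives $\theta=P(\mathbf{\hat{X}}=\mathbf{\hat{x}})=\sum_{O\in\mathcal{O}}\bar{g}(O)\,\pi'(O)$. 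Subtracting,
\[
\mathtt{bias}(\hat{\theta}^{\mathtt{rb}}_{N})=\frac{1}{N}\sum_{i=1}^{N}\Big(\sum_{O\in\mathcal{O}}\bar{g}(O)\big((Q')^{T+i}(\bar{x},O)-\pi'(O)\big)\Big).
\]

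To finish I would invoke the elementary bound that for any distributions $\mu,\nu$ on $\mathcal{O}$ and any $h$ with $0\le h\le 1$ one has $\big|\sum_O h(O)(\mu(O)-\nu(O))\big|\le d_{\mathsf{tv}}(\mu,\nu)$ (split the sum at $\{O:\mu(O)\ge\nu(O)\}$, bound $h$ by $1$ on one part and by $0$ on the other, and recall that the positive and negative parts of $\mu-\nu$ each sum to $d_{\mathsf{tv}}(\mu,\nu)$). Applying this with $h=\bar{g}$, $\mu=(Q')^{T+i}_{\bar{x}}$, $\nu=\pi'$ bounds the $i$-th inner term by $d_{\mathsf{tv}}\big((Q')^{T+i}_{\bar{x}},\pi'\big)$. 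Since every retained sample point has index $T+i\ge T\ge\tau'(\epsilon)$, the definition of the mixing time of $\mathcal{M}'$ gives $d_{\mathsf{tv}}\big((Q')^{T+i}_{\bar{x}},\pi'\big)\le\epsilon$; because $\tau'(\epsilon)$ is a maximum over starting states, this holds for every $\bar{x}$, hence for whatever (possibly random) initial distribution $\mathcal{M}$ is run from. Averaging $N$ terms each of absolute value at most $\epsilon$ yields $|\mathtt{bias}(\hat{\theta}^{\mathtt{rb}}_{N})|\le\epsilon$.

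The two algebraic identities (the lumping pushforward and the mixture formula for $\theta$) and the $[0,1]$-test-function characterization of total variation are routine. The one step to handle with care is the reduction to the quotient chain: making rigorous that the orbit-label process started from $x$ is exactly $\mathcal{M}'$ started from $x^{\mathfrak{G}}$ — precisely what exact (indeed already ordinary) lumpability of $\mathcal{M}$ with respect to $\mathcal{O}$ provides — and then keeping the bound uniform over the initial state so as to match the worst-case definition of $\tau'(\epsilon)$.
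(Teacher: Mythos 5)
Your proof is correct and follows essentially the same route as the paper's: decompose the estimator over orbits, use exact lumpability (Proposition~\ref{proposition-lumpable}) to push the time-$t$ law of $\mathcal{M}$ down to the quotient chain, and bound the bias by total variation distance using the fact that $P(\mathbf{\hat{X}}=\mathbf{\hat{x}}\mid O)\in[0,1]$. The only cosmetic difference is bookkeeping: you bound each time index $T+i$ separately and average, while the paper bounds the single averaged occupation distribution $\mathbb{E}[\mathbb{I}_{\{s\in O\}}]$ at once; both rely on $d_{\mathsf{tv}}\le\epsilon$ holding for all $t\ge\tau'(\epsilon)$.
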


Hence, if one wants to make sure that the absolute value of the bias of the Rao-Blackwell estimator is smaller than a given $\epsilon > 0$, one only needs a burn-in period consisting of $\tau'(\epsilon)$ simulation steps, where $\tau'(\epsilon)$ is the mixing time of the quotient Markov chain. Existing work on analyzing the influence of symmetries in random walks has shown that it is often more convenient to investigate the mixing behavior of the quotient Markov chain~\cite{boyd:2005}. In the context of marginal density estimation, Markov chains implicitly operating on the orbit partition of the assignment space were shown to have better mixing behavior~\cite{niepert2012omcmc}. 

In summary, whenever probabilistic models exhibit non-trivial symmetries we can have the best of both worlds. The bias owed to the fact that we are collecting a finite number of sample points from a Markov chain as well as the asymptotic variance~\cite{neal:2004} of the Rao-Blackwell estimator are at least as small as those of the standard estimator. The more symmetric the  probabilistic model the larger the reduction in mean squared error.

We now present the experimental results for several large probabilistic models, both relational and non-relational.

\section{Experiments}

The aim of the empirical investigation is twofold. First, we want to verify the efficiency of the novel Rao-Blackwell estimator when applied as a post-processing step to the output of state-of-the-art sampling algorithms. Second, we want to test the hypothesis that the efficiency gains of the novel estimator on standard SRL models are similar empirically to those of state-of-the-art lifted inference algorithms.

For the SRL models we computed the orbit partitions of the variables based on the model's renaming automorphisms~\cite{hung:2012}. As discussed earlier, renaming automorphisms are computable in time linear in the domain size. We applied \textsc{Gap}~\cite{GAP4} to compute the variables' orbit partition. 
For all non SRL models we computed the automorphism group and the orbit partitions as in \cite{niepert2012omcmc} using the graph automorphism algorithm \textsc{Saucy}~\cite{darga:2008} and the \textsc{Gap} system, respectively. Overall, the computation of the orbit partitions  of the models' variables took less than one second for each of the probabilistic models we considered.

\begin{figure*}[t!]
     \begin{center}
        \subfigure[Average time for the smokes-cancer MLN.]{%
            \label{fig:first2}
            \includegraphics[width=0.23\textwidth]{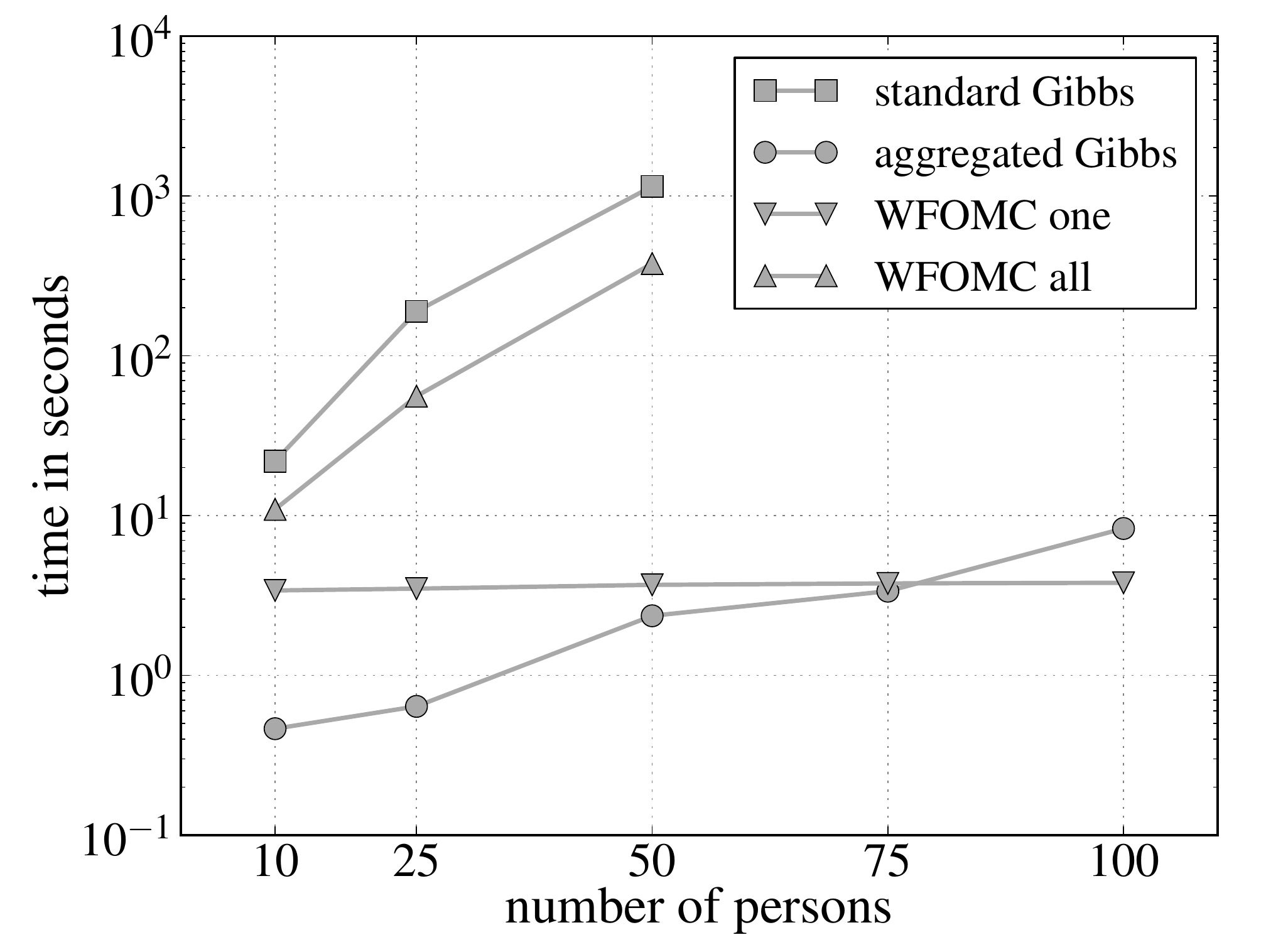}
        }%
        \hspace{1.2mm}
        \subfigure[Number of sample points for the smokes-cancer MLN.]{%
           \label{fig:second2}
           \includegraphics[width=0.23\textwidth]{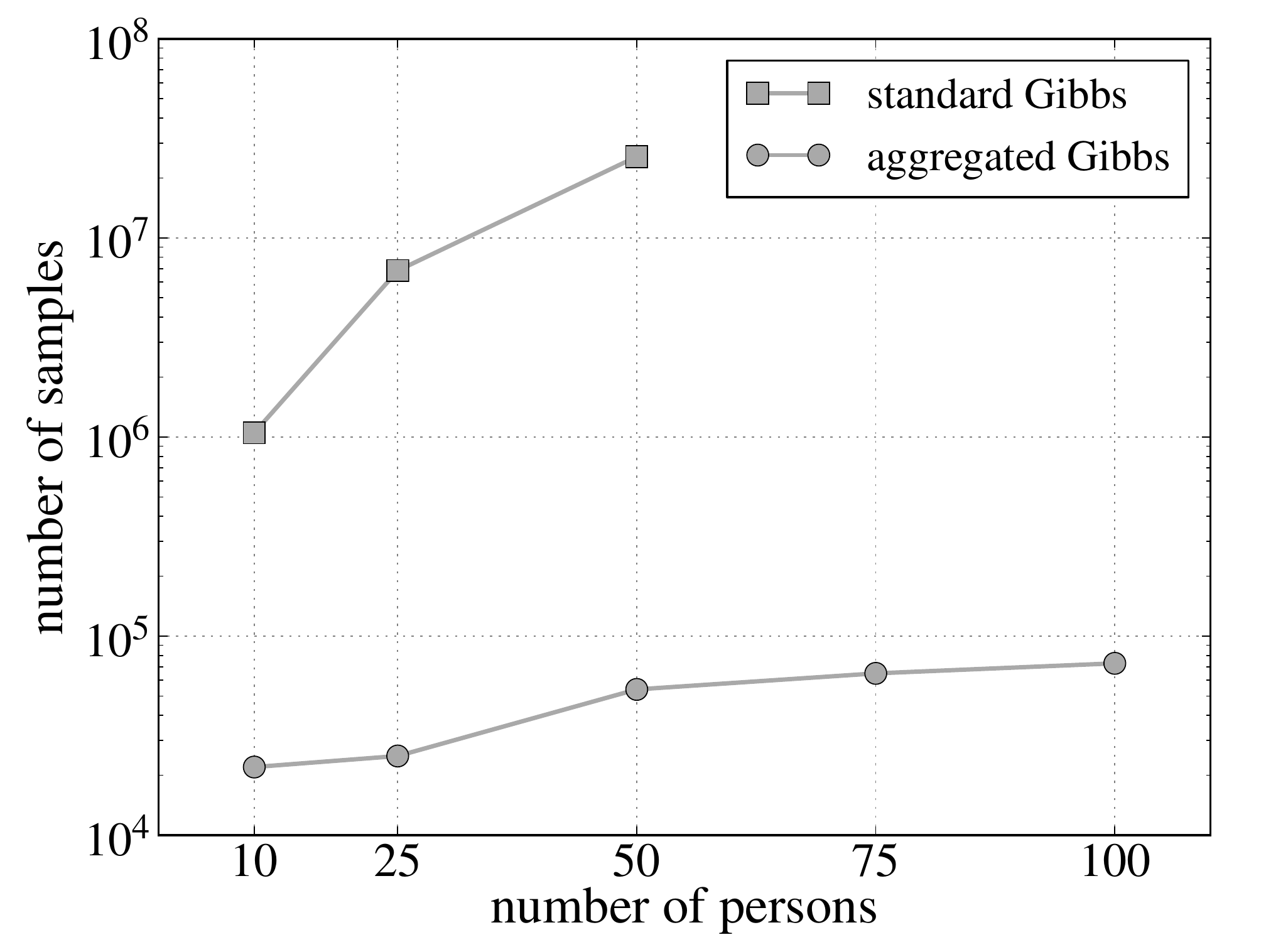}
        }  \hspace{1.2mm}
        \subfigure[Average time for the smokes-cancer MLN with transitivity.]{%
            \label{fig:third2}
            \includegraphics[width=0.23\textwidth]{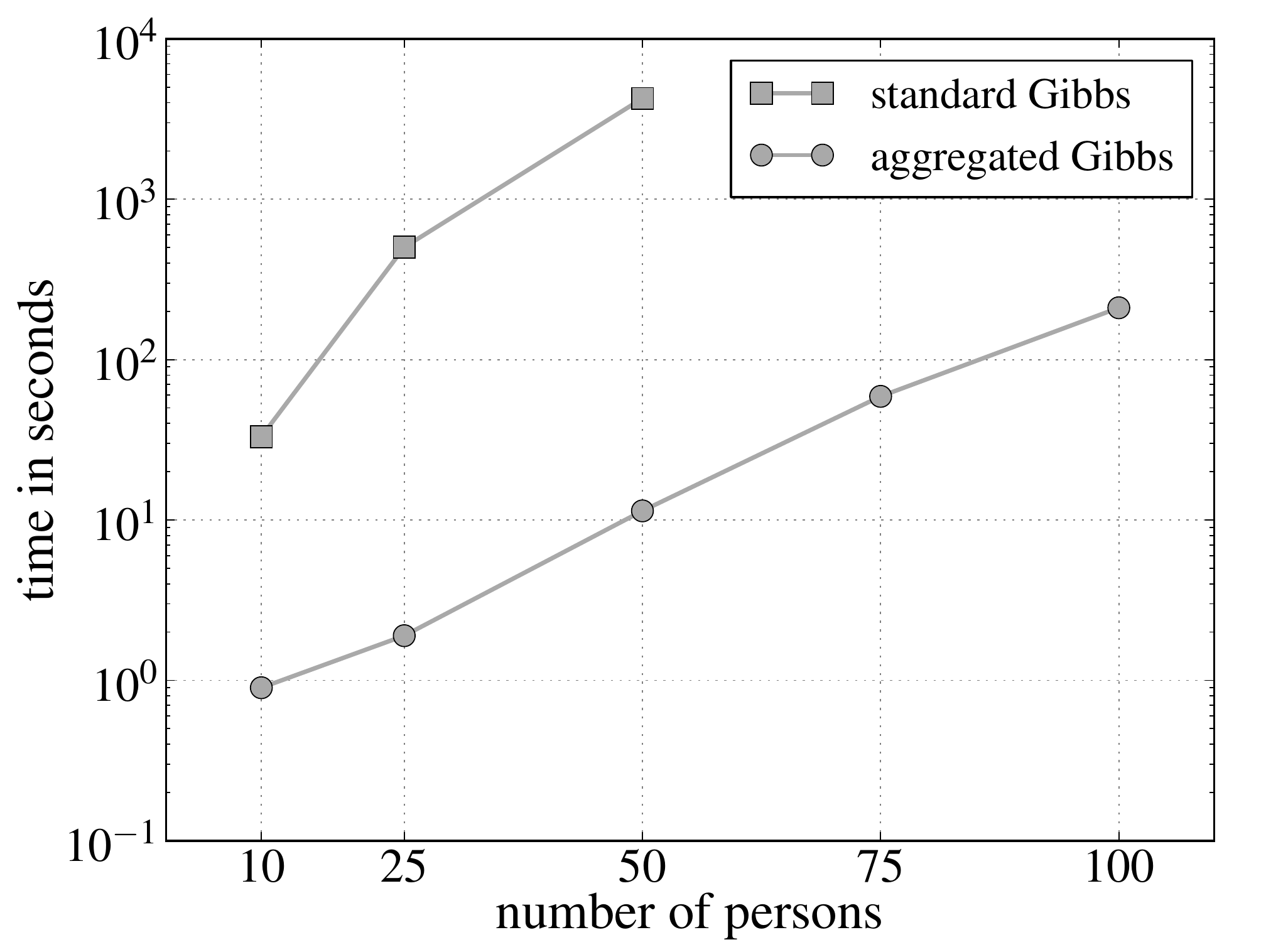}
        }%
        \hspace{1.2mm}
        \subfigure[Sample points for the  smokes-cancer MLN with transitivity.]{%
            \label{fig:fourth2}
            \includegraphics[width=0.23\textwidth]{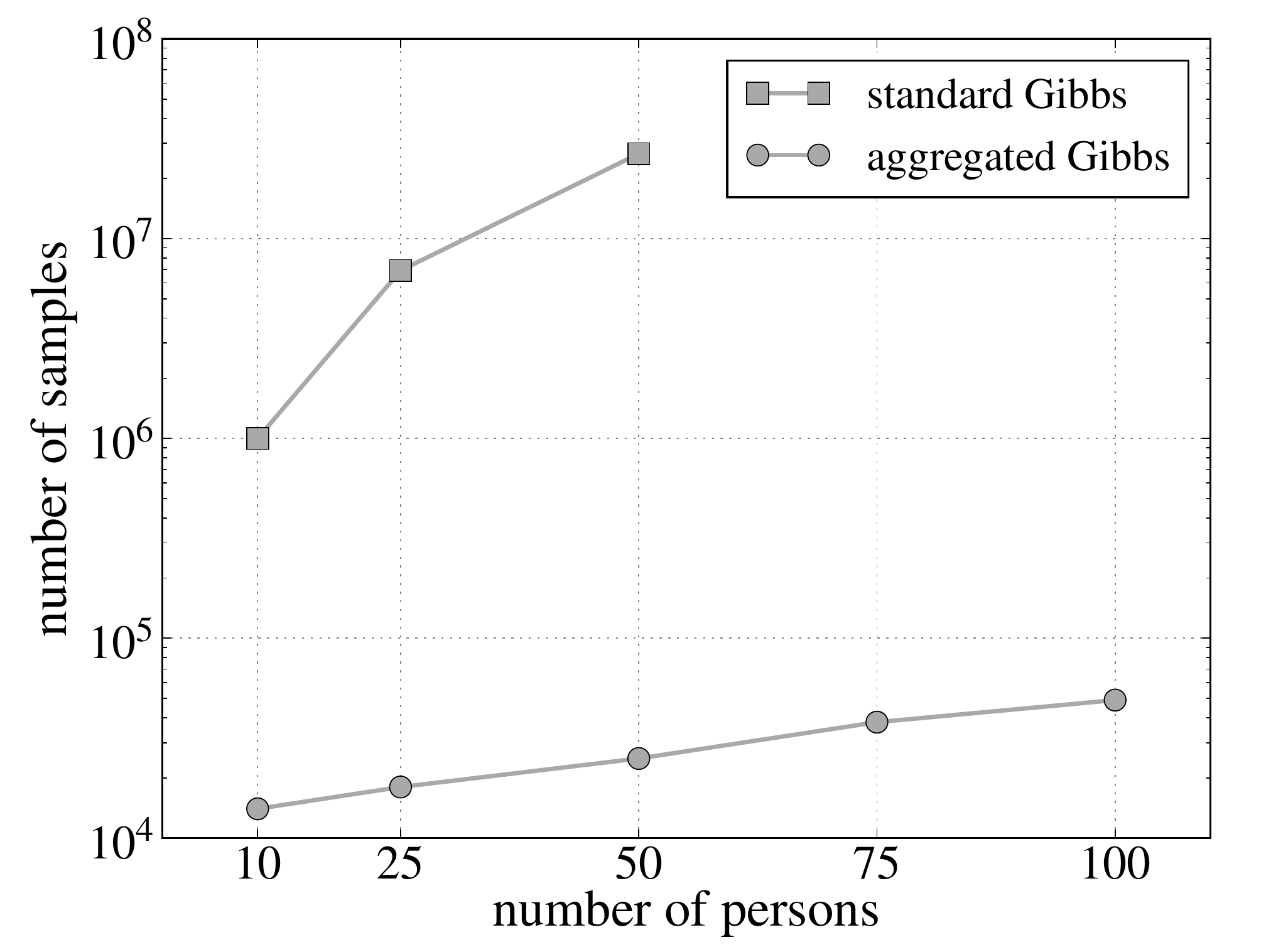}
        }%
    \end{center}
    \caption{\label{fig:MLNs-domain} Average time and number of sample points, respectively, needed to achieve an average KL divergence of $<0.0001$.}
\end{figure*}

We conducted experiments with several benchmark Markov logic networks, a statistical relational language general enough to capture numerous types of  graphical models~\cite{RD:2006}. Here, we used (a) the asthma-smokes-cancer MLN~\cite{venugopal:2012} with 10\% evidence\footnote{For a random 10\% of all people it is known (a) whether they smoke or not and (b) who 10 of their friends are.}; (b) the ``Friends \& Smokers" MLN exactly as specified in \cite{singla2008lifted} with 10\% evidence; and the ``Friends \& Smokers" MLN with the transitivity formula on the \texttt{friends} relation having weight $1.0$, (c) without and (d) with 10\% evidence. 
Each of the models had between $10$ and $100$ objects in the domain, leading to log-linear models with $10^2$-$10^4$ variables and $10^2$-$10^6$ features. We used \textsc{Wfomc}~\cite{broeck:2011}, to compute the exact single-variable marginals of the asthma MLN. For all other MLNs, existing exact lifted inference algorithm were unable to compute single-variable densities. In these cases, we performed several very long runs (burn-in $1$ day; overall $5$ days) of a Gibbs sampler guaranteed to be ergodic and made sure that state-of-the-art MCMC diagnostics indicated convergence~\cite{brooks:1998}.  

We executed our implementation of the standard Gibbs sampler and \textsc{Alchemy}'s implementation of the  MC-SAT algorithm~\cite{Poon:2006} on the MLNs based on $10$ separate runs, without a burn-in period. For each sampling algorithm we computed the marginal densities with the standard estimator and the Rao-Blackwell estimator, respectively, which we implemented in the \textsc{Gap} programming language\footnote{https://code.google.com/p/lifted-mcmc/}.  Figure~\ref{fig:MLNs-comparative} depicts, for each MLN, the average Kullback-Leibler divergence\footnote{We computed both MSE and average KL divergence but omitted the qualitatively identical MSE results due to space constraints.} between the estimated and precomputed true single-variable marginals of the non-evidence variables plotted against the absolute running time of the algorithms in seconds.

The Rao-Blackwell estimator improves the density estimates by at least an order of magnitude and, in the absence of evidence, even up to four orders of magnitude relative to the standard estimator. The improvement of the empirical results is independent of the relational structure of the MLNs. For the MLN with a transitivity formula on the \texttt{friends} relation, generally considered a problematic and as of now not domain-liftable model, the results are as pronounced as for the MLNs known to be domain-liftable. 

We also conducted experiments with non-SRL models to investigate the efficiency of the approach on graphical models. We executed the Gibbs sampler with and without using the Rao-Blackwell estimator on a $100\times100$ $2$-coloring grid model with binary random variables. The symmetries of the model are the reflection and rotation automorphisms of the $2$-dimensional square grid.  Figure~\ref{fig:MLNs-comparative}(e) depicts the plot of the average KL divergence against the running time in seconds, where each pairwise factor between neighboring variables $X, Y$ was defined as $\exp(0.2)$ if $X\neq Y$, and $1$ otherwise. Figure~\ref{fig:MLNs-comparative}(f) depicts the plot of the same grid model except that the pairwise factors were defined as $1$ if $X\neq Y$, and $0$ otherwise. The results clearly demonstrate the superior performance of the Rao-Backwell estimator even for probabilistic models with a smaller number of symmetries.

In addition, we analyzed the impact of the domain size on the estimator performance for (a) domain-liftable MLNs and (b) MLNs not liftable by any state-of-the-art exact lifted inference algorithm. We used the ``Friends \& Smokers" MLN without evidence; and with and without the transitivity formula on the \texttt{friends} relation. The MLN without transitivity is a standard benchmark for lifted algorithms whereas MLNs with transitivity are considered difficult and no exact lifted inference algorithm exists for such MLNs as of now. Figures~\ref{fig:MLNs-domain}(a)\&(c) depict the time needed to achieve an average KL divergence of less than $10^{-4}$ plotted against the domain size of the models without and with transitivity.  The increase in runtime is far less pronounced with the Rao-Blackwell estimator. The plots resemble those often shown in lifted inference papers where an algorithm that can lift a model is contrasted with one that cannot. The increase in runtime is slightly higher for the model with transitivity but this is owed to the size increase of each variable's Markov blanket and, thus, the time needed for each Gibbs sampler step. Figures~\ref{fig:MLNs-domain}(b)\&(d) plots the sample size required to achieve an average KL divergence of less than $10^{-4}$ against the domain size. Interestingly, the number of sample points is almost identical for the model with and without transitivity, indicating that the advantage of the Rao-Blackwell estimator is independent of the model's formulas. 

In Figure~\ref{fig:MLNs-domain}(a) we plot the results of \textsc{Wfomc} for compiling a first-order circuit and computing (a) one single-variable marginal and (b) all single-variable marginals. \textsc{Wfomc} has constant runtime for exactly computing one single-variable marginal density. The Rao-Blackwell  estimation for all of the model's variables scales sub-linearly and is more efficient than repeated calls to \textsc{Wfomc}. While we do not need to run \textsc{Wfomc} once per single-variable marginal density  if the variables are first partitioned into sets of variables with identical marginal densities~\cite{braz:2005}, the results  demonstrate that the symmetry-aware estimator scales comparably to exact lifted inference algorithms on domain-liftable models and that its runtime is polynomial in the domain size of the MLNs.

\section{Discussion}

A Rao-Blackwell estimator was developed and shown, both analytically and empirically, to have lower mean squared error under non-trivial model symmetries. The presented theory provides a novel perspective on the notion of lifted inference and the underlying reasons for the feasibility of marginal density estimation in large but highly symmetric probabilistic models. For the first time, the applicability of such an approach does not directly depend on the properties of the relational structure such as the arity of predicates and the type of formulas but  only  on the given \emph{evidence} and the corresponding automorphism group of the model. 
We believe the theoretical and empirical insights to be of great interest to the machine learning community and that the presented work might contribute to a deeper understanding of lifted inference algorithms.


\section{Acknowledgments}

Many thanks to Guy Van den Broeck who provided feedback on an earlier draft of the paper. This work was partially supported by a Google faculty research award.

\bibliographystyle{aaai}
\bibliography{aaai13}

\appendix
\newpage

\section{Proof of Lemma~\ref{lemma-closed-form}}

\begin{lemmaappendix}
Let $\mathbf{X}$ be a finite sequence of random variables with joint distribution $P(\mathbf{X})$, let $\mathfrak{G}$ be an automorphism group of $\mathbf{X}$,  let $\mathbf{\hat{X}}$ be a subsequence of $\mathbf{X}$, let ${\mathbf{\hat{X}}}^\mathfrak{G}$ be the orbit of $\mathbf{\hat{X}}$, and let $s \in \mathsf{Val}(\mathbf{X})$. Then,  
$$P(\mathbf{\hat{X}}=\mathbf{\hat{x}}\ |\ s^{\mathfrak{G}}) = \frac{\mathsf{H}_{\mathbf{\hat{X}}=\mathbf{\hat{x}}}^{\mathfrak{G}}(s)}{|\mathbf{\hat{X}}^\mathfrak{G}|} = \mathbb{E}[ \hat{\theta}_{N}\ |\ \mathsf{H}_{\mathbf{\hat{X}}=\mathbf{\hat{x}}}^{\mathfrak{G}}(s) ].$$
\end{lemmaappendix}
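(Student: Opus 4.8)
The plan is to show that the left-hand expression $P(\mathbf{\hat{X}}=\mathbf{\hat{x}}\mid s^{\mathfrak{G}})$ and the middle expression $\mathsf{H}_{\mathbf{\hat{X}}=\mathbf{\hat{x}}}^{\mathfrak{G}}(s)/|\mathbf{\hat{X}}^{\mathfrak{G}}|$ are two ways of writing one and the same uniform average over the whole group $\mathfrak{G}$, and then to read off the third (Rao--Blackwell) expression from the fact that the orbit Hamming weight is constant on each orbit of the assignment space.

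First I would record the compatibility of the two actions of $\mathfrak{G}$, on assignments and on variable sequences: for every $\mathfrak{g}\in\mathfrak{G}$ and every subsequence $\mathbf{A}$ of $\mathbf{X}$ one has $s^{\mathfrak{g}}\langle\mathbf{A}\rangle = s\langle\mathbf{A}^{\mathfrak{g}^{-1}}\rangle$, so the event $\{s^{\mathfrak{g}}\langle\mathbf{\hat{X}}\rangle=\mathbf{\hat{x}}\}$ coincides with $\{s\langle\mathbf{\hat{X}}^{\mathfrak{g}^{-1}}\rangle=\mathbf{\hat{x}}\}$. Next I would invoke the orbit--stabilizer theorem: each element of the orbit $s^{\mathfrak{G}}$ arises as $s^{\mathfrak{g}}$ for exactly $|\mathfrak{G}|/|s^{\mathfrak{G}}|$ group elements $\mathfrak{g}$, and likewise each element of $\mathbf{\hat{X}}^{\mathfrak{G}}$ arises as $\mathbf{\hat{X}}^{\mathfrak{g}}$ for exactly $|\mathfrak{G}|/|\mathbf{\hat{X}}^{\mathfrak{G}}|$ group elements. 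Combining these two facts, and using that $\mathfrak{g}\mapsto\mathfrak{g}^{-1}$ permutes $\mathfrak{G}$, turns the definition $P(\mathbf{\hat{X}}=\mathbf{\hat{x}}\mid s^{\mathfrak{G}})=\frac{1}{|s^{\mathfrak{G}}|}\sum_{\mathbf{x}\in s^{\mathfrak{G}}}\mathbb{I}_{\{\mathbf{x}\langle\mathbf{\hat{X}}\rangle=\mathbf{\hat{x}}\}}$ stated just before the lemma, and the definition $\mathsf{H}_{\mathbf{\hat{X}}=\mathbf{\hat{x}}}^{\mathfrak{G}}(s)=\sum_{\mathbf{A}\in\mathbf{\hat{X}}^{\mathfrak{G}}}\mathbb{I}_{\{s\langle\mathbf{A}\rangle=\mathbf{\hat{x}}\}}$, both into the single average $\frac{1}{|\mathfrak{G}|}\sum_{\mathfrak{g}\in\mathfrak{G}}\mathbb{I}_{\{s\langle\mathbf{\hat{X}}^{\mathfrak{g}}\rangle=\mathbf{\hat{x}}\}}$; this establishes the first equality.

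For the second equality, the same reindexing shows that $\mathsf{H}_{\mathbf{\hat{X}}=\mathbf{\hat{x}}}^{\mathfrak{G}}$ takes one common value on every orbit $O$ of the assignment space, so by the first equality $P(\mathbf{\hat{X}}=\mathbf{\hat{x}}\mid O)$ depends on $O$ only through that value. Writing the standard estimator as $\hat{\theta}_{N}=\frac{1}{N}\sum_{i}\mathbb{I}_{\{s_i\langle\mathbf{\hat{X}}\rangle=\mathbf{\hat{x}}\}}$ with each $s_i$ drawn from $P$, the conditional expectation $\mathbb{E}\big[\mathbb{I}_{\{s_i\langle\mathbf{\hat{X}}\rangle=\mathbf{\hat{x}}\}}\mid \mathsf{H}_{\mathbf{\hat{X}}=\mathbf{\hat{x}}}^{\mathfrak{G}}(s_i)=h\big]$ is a $P$-weighted average, over the orbits $O$ whose orbit Hamming weight equals $h$, of the quantities $P(\mathbf{\hat{X}}=\mathbf{\hat{x}}\mid O)$; but by the first equality each of those already equals $h/|\mathbf{\hat{X}}^{\mathfrak{G}}|$, so the average collapses to $h/|\mathbf{\hat{X}}^{\mathfrak{G}}|$, and averaging over $i$ yields the stated Rao--Blackwellization.

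I expect the main obstacle to be bookkeeping rather than conceptual: pinning down the correct direction of the induced action in $s^{\mathfrak{g}}\langle\mathbf{A}\rangle = s\langle\mathbf{A}^{\mathfrak{g}^{-1}}\rangle$, and carefully tracking the stabilizer multiplicities so that the two different normalizations $1/|s^{\mathfrak{G}}|$ and $1/|\mathbf{\hat{X}}^{\mathfrak{G}}|$ both reduce to the uniform average $1/|\mathfrak{G}|$. The one genuinely delicate point is the last step: the orbit Hamming weight may be a strictly \emph{coarser} statistic than the orbit partition of the assignment space itself, and one must check that this coarsening leaves the conditional expectation unchanged --- which works precisely because, by the first equality, $P(\mathbf{\hat{X}}=\mathbf{\hat{x}}\mid O)$ is already a function of the orbit Hamming weight alone.
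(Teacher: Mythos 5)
Your proposal is correct and follows essentially the same route as the paper: both proofs reduce the two normalizations $1/|s^{\mathfrak{G}}|$ and $1/|\mathbf{\hat{X}}^{\mathfrak{G}}|$ to a uniform count over $\mathfrak{G}$ via two applications of the orbit--stabilizer theorem, using the compatibility $s^{\mathfrak{g}}\langle\mathbf{\hat{X}}\rangle = s\langle\mathbf{\hat{X}}^{\mathfrak{g}^{-1}}\rangle$ (which the paper leaves implicit). Your handling of the final equality $\mathbb{E}[\hat{\theta}_N \mid \mathsf{H}]$ --- checking that conditioning on the possibly coarser Hamming-weight statistic still collapses because $P(\mathbf{\hat{X}}=\mathbf{\hat{x}}\mid O)$ is already a function of that statistic --- is actually more careful than the paper, which simply asserts this step.
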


\begin{proof}
Let $\mathfrak{G}_s := \{ \mathfrak{g} \in \mathfrak{G}\ |\ s^{\mathfrak{g}}=s\}$ be the stabilizer subgroup of $s$. Then, $$P(\mathbf{\hat{X}}=\mathbf{\hat{x}}\ |\ s^{\mathfrak{G}}) = \frac{|\{\mathfrak{g} \in \mathfrak{G}\ |\ s^{\mathfrak{g}}\langle\mathbf{\hat{X}}\rangle=\mathbf{\hat{x}}\}|}{|\mathfrak{G}|}$$ since for each $x \in s^{\mathfrak{G}}$ we have that $|\{ \mathfrak{g} \in \mathfrak{G}\ |\ s^{\mathfrak{g}} = x\}| = |\mathfrak{G}_s|$ by the orbit stabilizer theorem. For each $\mathbf{A} \in {\mathbf{\hat{X}}}^\mathfrak{G}$ let $\mathfrak{G}^{\mathbf{A}} := \{ \mathfrak{g} \in \mathfrak{G}\ |\ \mathbf{A}^{\mathfrak{g}}=\mathbf{\hat{X}}\}$. Again, by the orbit stabilizer theorem, we have that $|\mathfrak{G}^{\mathbf{A}}| = |\mathfrak{G}_{\mathbf{\hat{X}}}|$ for each $\mathbf{A} \in {\mathbf{\hat{X}}}^\mathfrak{G}$, where $\mathfrak{G}_{\mathbf{\hat{X}}}$ is the stabilizer subgroup of $\mathbf{\hat{X}}$. Hence,  $$\frac{|\{\mathfrak{g} \in \mathfrak{G}\ |\ s^{\mathfrak{g}}\langle\mathbf{\hat{X}}\rangle=\mathbf{\hat{x}}\}|}{|\mathfrak{G}|} = \frac{\mathsf{H}_{\mathbf{\hat{X}}=\mathbf{\hat{x}}}^{\mathfrak{G}}(s)|\mathfrak{G}_{\mathbf{\hat{X}}}|}{|\mathfrak{G}|} = \frac{\mathsf{H}_{\mathbf{\hat{X}}=\mathbf{\hat{x}}}^{\mathfrak{G}}(s)}{|\mathbf{\hat{X}}^\mathfrak{G}|}.$$
Hence, $\mathsf{H}_{\mathbf{\hat{X}}=\mathbf{\hat{x}}}^{\mathfrak{G}}$ is a sufficient statistic for the marginal density $P(\mathbf{\hat{X}}=\mathbf{\hat{x}})$. Moreover, we have that $$P(\mathbf{\hat{X}}=\mathbf{\hat{x}}\ |\ s^{\mathfrak{G}}) = \mathbb{E}[ \hat{\theta}_{N}\ |\ \mathsf{H}_{\mathbf{\hat{X}}=\mathbf{\hat{x}}}^{\mathfrak{G}}(s) ].$$
This concludes the proof.
\end{proof}

\section{Proof of Theorem~\ref{theorem-rao-blackwell}}

\begin{theoremappendix}
Let $\mathbf{X}$ be a finite sequence of random variables with joint distribution $P(\mathbf{X})$, let $\mathfrak{G}$ be an automorphism group of $\mathbf{X}$ given by $R$ generators, let $\mathbf{\hat{X}}$ be a subsequence of $\mathbf{X}$, let $\mathbf{\hat{X}}^\mathfrak{G}$ be the orbit of $\mathbf{\hat{X}}$, and let $\theta := P(\mathbf{\hat{X}}=\mathbf{\hat{x}})$ be the marginal density to be estimated. The Rao-Blackwell estimator $\hat{\theta}^{\mathtt{rb}}_{N}$ has the following properties:
\begin{enumerate}
{\setlength\itemindent{12pt} \item[(a)] Its worst-case time complexity is $O(R|\mathbf{\hat{X}}^\mathfrak{G}|\hspace{-0.5mm} +\hspace{-0.5mm} N |\mathbf{\hat{X}}^\mathfrak{G}|)$; }
{\setlength\itemindent{12pt} \item[(b)] $\mbox{MSE}[\hat{\theta}^{\mathtt{rb}}_{N}] \leq \mbox{MSE}[\hat{\theta}_N]$. }
\end{enumerate} 
The inequality of (b) is strict if there exists a joint assignment $s$ with non-zero density and $0 < \mathsf{H}_{\mathbf{\hat{X}}=\mathbf{\hat{x}}}^{\mathfrak{G}}(s) <|\mathbf{\hat{X}}^{\mathfrak{G}}| > 1$.
\end{theoremappendix}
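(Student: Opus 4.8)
The plan is to handle (a) as a routine analysis of the obvious algorithm, and to get (b) together with the strictness claim as a direct invocation of the Rao--Blackwell theorem stated earlier, once the right sufficient statistic has been identified; throughout I work with i.i.d.\ sample points (the Markov-chain case being the content of Theorem~\ref{theorem-mcmc}). For part (a): first compute the orbit $\mathbf{\hat{X}}^\mathfrak{G}$ by the standard orbit algorithm --- starting from $\mathbf{\hat{X}}$ and repeatedly applying the $R$ generators (and their inverses) to newly discovered tuples until closure --- which costs $O(R\,|\mathbf{\hat{X}}^\mathfrak{G}|)$. Then, for each of the $N$ sample points $s_i$, obtain $\mathsf{H}^{\mathfrak{G}}_{\mathbf{\hat{X}}=\mathbf{\hat{x}}}(s_i)$ by a single pass over the precomputed orbit, at cost $O(|\mathbf{\hat{X}}^\mathfrak{G}|)$ each; by Lemma~\ref{lemma-closed-form} this already gives $P(\mathbf{\hat{X}}=\mathbf{\hat{x}}\mid s_i^\mathfrak{G}) = \mathsf{H}^{\mathfrak{G}}_{\mathbf{\hat{X}}=\mathbf{\hat{x}}}(s_i)/|\mathbf{\hat{X}}^\mathfrak{G}|$, and averaging these $N$ numbers costs a further $O(N)$. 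Adding the three terms yields the stated bound.

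For part (b): $\hat{\theta}_N$ is bounded in $[0,1]$, so $\mathbb{E}[\hat{\theta}_N^2]<\infty$. Take $T := (s_1^\mathfrak{G},\dots,s_N^\mathfrak{G})$, the tuple of orbits of the sample points. Because $P$ is constant on each orbit, the conditional law of $(s_1,\dots,s_N)$ given $T$ is just the product of the uniform distributions on the respective orbits and in particular does not involve $P$, so $T$ is a sufficient statistic. By linearity of conditional expectation and Lemma~\ref{lemma-closed-form},
$$\mathbb{E}[\hat{\theta}_N\mid T] = \frac{1}{N}\sum_{i=1}^{N}\mathbb{E}[\mathbb{I}_{\{s_i\langle\mathbf{\hat{X}}\rangle=\mathbf{\hat{x}}\}}\mid s_i^\mathfrak{G}] = \frac{1}{N}\sum_{i=1}^{N}P(\mathbf{\hat{X}}=\mathbf{\hat{x}}\mid s_i^\mathfrak{G}) = \hat{\theta}^{\mathtt{rb}}_N,$$
so $\hat{\theta}^{\mathtt{rb}}_N$ is exactly the Rao--Blackwellization of $\hat{\theta}_N$ with respect to $T$, and the Rao--Blackwell theorem gives $\mathrm{MSE}[\hat{\theta}^{\mathtt{rb}}_N]\le\mathrm{MSE}[\hat{\theta}_N]$.

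For strictness I would use the standard Rao--Blackwell variance decomposition $\mathrm{MSE}[\hat{\theta}_N]-\mathrm{MSE}[\hat{\theta}^{\mathtt{rb}}_N]=\mathbb{E}[\Var(\hat{\theta}_N\mid T)]$, so it suffices to exhibit a value of $T$ of positive probability on which $\hat{\theta}_N$ is not almost surely constant. Given $s$ with $P(s)>0$ and $0<\mathsf{H}^{\mathfrak{G}}_{\mathbf{\hat{X}}=\mathbf{\hat{x}}}(s)<|\mathbf{\hat{X}}^\mathfrak{G}|$: the first inequality supplies $\mathfrak{g}_0\in\mathfrak{G}$ with $s\langle\mathbf{\hat{X}}^{\mathfrak{g}_0}\rangle=\mathbf{\hat{x}}$ and the second supplies $\mathfrak{g}_1\in\mathfrak{G}$ with $s\langle\mathbf{\hat{X}}^{\mathfrak{g}_1}\rangle\ne\mathbf{\hat{x}}$; pulling these back through the action shows $\mathbf{x}:=s^{\mathfrak{g}_0^{-1}}$ and $\mathbf{x}':=s^{\mathfrak{g}_1^{-1}}$ both lie in the orbit $s^\mathfrak{G}$, with $\mathbf{x}\langle\mathbf{\hat{X}}\rangle=\mathbf{\hat{x}}$ and $\mathbf{x}'\langle\mathbf{\hat{X}}\rangle\ne\mathbf{\hat{x}}$. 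Since $\mathsf{H}^{\mathfrak{G}}_{\mathbf{\hat{X}}=\mathbf{\hat{x}}}$ is constant on orbits and $P(\mathbf{x})=P(\mathbf{x}')=P(s)>0$, the two realizations $(\mathbf{x},s,\dots,s)$ and $(\mathbf{x}',s,\dots,s)$ both occur with positive probability, produce the same $T$, yet give values of $\hat{\theta}_N$ differing by $1/N$; hence $\Var(\hat{\theta}_N\mid T)>0$ on that $T$-event and the MSE gap is strictly positive.

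The routine parts are (a) and the algebraic bookkeeping in the displayed identity. I expect the real difficulties to be twofold: justifying that $T$ is genuinely a sufficient statistic even though we are not in a textbook parametric family --- the essential point being that orbit-invariance of $P$ forces the within-$T$ conditional law to be a fixed product of uniform distributions --- and the strictness construction, where one must convert the hypotheses on the orbit Hamming weight (statements about the action of $\mathfrak{G}$ on sequences of variables) into the coexistence of an agreeing and a disagreeing assignment inside a single orbit of the action of $\mathfrak{G}$ on the assignment space. This interplay between the two distinct orbit partitions highlighted in the paper's footnote is the step I consider the crux.
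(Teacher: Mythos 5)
Your proof is correct and takes the same overall route as the paper: part (a) is the identical orbit-algorithm-plus-linear-scan analysis, and part (b) exhibits $\hat{\theta}^{\mathtt{rb}}_{N}$ as the conditional expectation of $\hat{\theta}_{N}$ given a sufficient statistic and invokes the Rao--Blackwell theorem. Two of your choices differ from the paper's, and both are, if anything, more careful. First, you condition on the tuple of orbits $(s_1^{\mathfrak{G}},\dots,s_N^{\mathfrak{G}})$, whereas the paper conditions on the orbit Hamming weights $\mathsf{H}_{\mathbf{\hat{X}}=\mathbf{\hat{x}}}^{\mathfrak{G}}(s_i)$; your statistic is genuinely sufficient for the orbit-mixture parameterization (given the orbits, the conditional law is a fixed product of uniforms, free of the unknown orbit weights), whereas a level set of $\mathsf{H}$ can merge several orbits whose relative weights are unknown parameters, so the paper's sufficiency claim implicitly relies on exactly the observation you spell out. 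Since by Lemma~\ref{lemma-closed-form} both conditionings produce the same conditional expectation, the resulting estimator is unchanged and the MSE inequality follows either way. Second, for strictness the paper appeals in one line to $\hat{\theta}_{N}$ not being a function of the statistic; your argument via the decomposition $\mbox{MSE}[\hat{\theta}_{N}]-\mbox{MSE}[\hat{\theta}^{\mathtt{rb}}_{N}]=\mathbb{E}[\Var(\hat{\theta}_{N}\mid T)]$, together with the explicit construction of two positive-probability realizations sharing the same $T$ but differing in $\hat{\theta}_{N}$ by $1/N$, supplies the missing detail, and it correctly negotiates the passage between the two orbit actions (on variable sequences and on assignments) that the paper's footnote warns about. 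Your restriction to i.i.d.\ sample points matches the paper's implicit setting here, with the Markov-chain bias handled separately in Theorem~\ref{theorem-mcmc}.
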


\begin{proof}
We first construct the set $\mathbf{\hat{X}}^\mathfrak{G}$ once, which has a worst-case time complexity of $R|\mathbf{\hat{X}}^\mathfrak{G}|$~\cite{holt:2005}. For each sample point, we have to access an array representing the values of the sample point at most $|\mathbf{\hat{X}}^\mathfrak{G}|$ times. This allows us, for each sample point $s$, to compute $P(\mathbf{\hat{X}}=\mathbf{\hat{x}}\ |\ s^{\mathfrak{G}})$ in time $O(|\mathbf{\hat{X}}^\mathfrak{G}|)$ by Lemma~\ref{lemma-closed-form}.

Since $\mathsf{H}_{\mathbf{\hat{X}}=\mathbf{\hat{x}}}^{\mathfrak{G}}$ is a sufficient statistic for $\theta$ and $\hat{\theta}^{\mathtt{rb}}_{N} = \mathbb{E}[\hat{\theta}_N\ |\ \mathsf{H}_{\mathbf{\hat{X}}=\mathbf{\hat{x}}}^{\mathfrak{G}}]$ by Lemma~\ref{lemma-closed-form}, statement (b) follows from the Rao-Blackwell theorem~\cite{blackwell:1947}. If there exists a joint assignment $s$ with non-zero density and $0 < \mathsf{H}_{\mathbf{\hat{X}}=\mathbf{\hat{x}}}^{\mathfrak{G}}(s) <|\mathbf{\hat{X}}^\mathfrak{G}| > 1$, then $\hat{\theta}_N$ is not a function of $\mathsf{H}_{\mathbf{\hat{X}}=\mathbf{\hat{x}}}^{\mathfrak{G}}$ and the inequality is strict~\cite{blackwell:1947}.
\end{proof}

\newpage

\section{Proof of Theorem~\ref{theorem-mcmc}}

\begin{theoremappendix}
Let $\mathbf{X}$ be a finite sequence of random variables with joint distribution $P(\mathbf{X})$, let $\mathcal{M}$ be an ergodic Markov chain with stationary distribution $P$, and let $\mathcal{O}$ be an orbit partition of $\mathcal{M}$'s state space. 
Let $\hat{\theta}^{\mathtt{rb}}_{N}$ be the Rao-Blackwell estimator for $N$ sample points $s_{T+1}, ..., s_{T+N}$ collected from $\mathcal{M}$, after discarding the first $T$ sample points. 
Then, $|\mathtt{bias}(\hat{\theta}^{\mathtt{rb}}_{N})| \leq \epsilon$ if  $T \geq \tau'(\epsilon)$, where $\tau'(\epsilon)$ is the mixing time of the quotient Markov chain of $\mathcal{M}$ with respect to $\mathcal{O}$.
\end{theoremappendix}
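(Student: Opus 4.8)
The plan is to exploit the fact that the summands $P(\mathbf{\hat{X}}=\mathbf{\hat{x}}\mid s_{T+i}^{\mathfrak{G}})$ of the Rao-Blackwell estimator depend on the sample point only through its orbit, so that in expectation $\hat{\theta}^{\mathtt{rb}}_{N}$ is the average of a fixed $[0,1]$-valued function against the distribution that $\mathcal{M}$ induces on the orbit partition $\mathcal{O}$; by lumpability that distribution is exactly the $t$-step distribution of the quotient chain $\mathcal{M}'$, whose convergence is governed by $\tau'(\epsilon)$. Concretely, let $g:\Omega\to\mathcal{O}$ send each state to its orbit and set $f(O):=P(\mathbf{\hat{X}}=\mathbf{\hat{x}}\mid O)\in[0,1]$, so that $P(\mathbf{\hat{X}}=\mathbf{\hat{x}}\mid s^{\mathfrak{G}})=f(g(s))$. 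Writing $\mu_t$ for the law of $s_t$ and $\bar\mu_t(O):=\sum_{x\in O}\mu_t(x)$ for the induced law of $g(s_t)$, linearity of expectation gives $\mathbb{E}[\hat{\theta}^{\mathtt{rb}}_{N}]=\tfrac1N\sum_{i=1}^N\sum_{O\in\mathcal{O}}f(O)\,\bar\mu_{T+i}(O)$.

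The first main step is to identify the projected process $\{g(s_t)\}_t$ with a run of the quotient chain. By Proposition~\ref{proposition-lumpable}, $\mathcal{M}$ is exactly lumpable with respect to $\mathcal{O}$, which in particular means it is ordinarily lumpable, so $\bar\mu_{t+1}(O')=\sum_{O\in\mathcal{O}}\bar\mu_t(O)\,Q'(O,O')$ and hence $\bar\mu_{T+i}(O')=\sum_{O\in\mathcal{O}}\bar\mu_0(O)\,(Q')^{T+i}(O,O')$. A short computation shows that the push-forward $\bar P(O):=\sum_{x\in O}P(x)=P(O)$ is the (unique) stationary distribution of the ergodic chain $\mathcal{M}'$, and the mixture re-parameterization of the marginal density recalled earlier reads precisely $\theta=\sum_{O\in\mathcal{O}}f(O)\,\bar P(O)$. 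Subtracting, $\mathtt{bias}(\hat{\theta}^{\mathtt{rb}}_{N})=\tfrac1N\sum_{i=1}^N\sum_{O\in\mathcal{O}}f(O)\bigl(\bar\mu_{T+i}(O)-\bar P(O)\bigr)$.

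The second main step is a contraction estimate. Since $0\le f(O)\le 1$ and $\sum_{O}(\bar\mu_{T+i}(O)-\bar P(O))=0$, splitting $\mathcal{O}$ into the set where $\bar\mu_{T+i}>\bar P$ and its complement gives $\bigl|\sum_{O}f(O)(\bar\mu_{T+i}(O)-\bar P(O))\bigr|\le d_{\mathsf{tv}}(\bar\mu_{T+i},\bar P)$, with no factor of two because $f$ takes values in $[0,1]$. Hence $|\mathtt{bias}(\hat{\theta}^{\mathtt{rb}}_{N})|\le\tfrac1N\sum_{i=1}^N d_{\mathsf{tv}}(\bar\mu_{T+i},\bar P)$, and by convexity of $\nu\mapsto d_{\mathsf{tv}}(\nu,\bar P)$ together with $\bar\mu_{T+i}(\cdot)=\sum_O\bar\mu_0(O)(Q')^{T+i}(O,\cdot)$ each term is at most $\max_{O}d_{\mathsf{tv}}\bigl((Q')^{T+i}_{O},\bar P\bigr)$. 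For $i\ge1$ and $T\ge\tau'(\epsilon)$ we have $T+i\ge\tau'(\epsilon)$, so the definition of the mixing time of the quotient chain yields $d_{\mathsf{tv}}((Q')^{T+i}_{O},\bar P)\le\epsilon$ for every starting orbit $O$; averaging over $i$ gives $|\mathtt{bias}(\hat{\theta}^{\mathtt{rb}}_{N})|\le\epsilon$, as claimed.

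I expect the main obstacle to be the bookkeeping in the first step rather than any hard estimate: one must extract \emph{ordinary} lumpability (the statement that the projected chain evolves by $Q'$ for \emph{every} initial law, so that $\bar\mu_t$ really is the $t$-step law of $\mathcal{M}'$) from the ``exact lumpability'' asserted in Proposition~\ref{proposition-lumpable} --- legitimate because the paper's definition of exact lumpability subsumes ordinary lumpability --- and one must check that $\bar P=P(\cdot)$ is the stationary distribution of $\mathcal{M}'$, so that $\theta$ is exactly the $\bar P$-average of $f$ and the gap $\bar\mu_{T+i}-\bar P$ is precisely what the quotient chain's mixing time controls. Everything after that ($[0,1]$-bounded contraction against $d_{\mathsf{tv}}$, convexity, and invoking $\tau'(\epsilon)$) is routine.
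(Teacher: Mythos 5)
Your proposal is correct and follows essentially the same route as the paper's proof: decompose $\hat{\theta}^{\mathtt{rb}}_{N}$ over orbits, use exact lumpability to identify the orbit-projected law of the chain with the $t$-step law of the quotient chain, and bound the bias by the total variation distance using the $[0,1]$-boundedness of $P(\mathbf{\hat{X}}=\mathbf{\hat{x}}\mid O)$. Your bookkeeping is in fact slightly more careful than the paper's, which writes a single $\mathbb{E}[\mathbb{I}_{\{s\in O\}}]$ for all sample points rather than indexing the per-step laws $\bar\mu_{T+i}$ and averaging as you do.
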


\begin{proof}
For a subsequence $\mathbf{\hat{X}}$ of $\mathbf{X}$, let $\mathbf{\xi} := \mathbf{\hat{X}}=\mathbf{\hat{x}}$ be the marginal assignment whose density $\theta$ is to be estimated, let $\mathsf{Val}(\mathbf{X})$ be the assignment space of  $\mathbf{X}$, and let $S = \{s_{T+1}, ..., s_{T+N}\}$ be the multiset of sample points collected from $\mathcal{M}$, after discarding the first $T$ sample points.
Since $\mathcal{O}$ is a partition of the assignment space, we have that  $$\hat{\theta}^{\mathtt{rb}}_{N} = \frac{1}{N}\sum_{s \in S}P(\mathbf{\xi}\ |\ s^{\mathfrak{G}}) = \sum_{O \in \mathcal{O}} P(\mathbf{\xi}\ |\ O) \frac{1}{N} \sum_{s \in S} \mathbb{I}_{\{s \in O\}} .$$  
Hence,$$\mathbb{E}[\hat{\theta}^{\mathtt{rb}}_{N}] = \sum_{O \in \mathcal{O}}P(\mathbf{\xi}\ |\ O) \mathbb{E}[ \mathbb{I}_{\{s \in O\}}],$$ where $\mathbb{E}[ \mathbb{I}_{\{s \in O\}}]$ is the expectation of some sample point being located in the orbit $O$. $\mathbb{E}[ \mathbb{I}_{\{s \in O\}}]$ defines a probability distribution over the space $\mathcal{O}$. If the sample points are independent, then $\mathbb{E}[ \mathbb{I}_{\{s \in O\}}] = P(O)$, for all $O\in\mathcal{O}$, and the estimator is unbiased. Since we collect sample points from a Markov chain we will often have that $\mathbb{E}[ \mathbb{I}_{\{s \in O\}}] \neq P(O)$.

By the assumptions and Proposition~\ref{proposition-lumpable}, the Markov chain $\mathcal{M}$ is exactly lumpable with respect to $\mathcal{O}$ and, hence, for all states $x$  of $\mathcal{M}$, all $t \in \{1, 2, ...\}$, and all orbits $O \in \mathcal{O}$, we have that $\sum_{o \in O}Q^t(x, o) = {Q'}^t(x^{\mathfrak{G}}, O)$, where ${Q}^t(x, o)$ is the probability of the Markov chain $\mathcal{M}$ being in state $o$ after $t$ simulation steps  if the chain starts in state $x$.
In addition, we start collecting sample points after $T \geq \tau'(\epsilon)$ simulation steps and, thus, $$\frac{1}{2}\sum_{O \in \mathcal{O}} |\mathbb{E}[\mathbb{I}_{\{s \in O\}}] - P(O)| \leq$$ $$\max_{x \in \mathsf{Val}(\mathbf{X})}{\left\lbrace\frac{1}{2}\sum_{O \in \mathcal{O}} |{Q'}^T(x^{\mathfrak{G}}, O) - P(O)|\right\rbrace} \leq \epsilon.$$

\noindent Finally, $|\mathtt{bias}(\hat{\theta}^{\mathtt{rb}}_{N})| = |\mathbb{E}[\hat{\theta}^{\mathtt{rb}}_{N} - \theta]| =$
$$\left|\sum_{O \in \mathcal{O}}P(\mathbf{\xi}\ |\ O) \mathbb{E}[\mathbb{I}_{\{s \in O\}}]  - \sum_{O \in \mathcal{O}} P(\mathbf{\xi}\ |\ O) P(O) \right| =$$
$$\left|\sum_{O\in \mathcal{O}}P(\mathbf{\xi}\ |\ O) (\mathbb{E}[\mathbb{I}_{\{s \in O\}}] - P(O))\right| \leq $$
$$\sum_{\substack{O \in \mathcal{O}\\ \mathbb{E}[\mathbb{I}_{\{s \in O\}}] \geq P(O)}} (\mathbb{E}[\mathbb{I}_{\{s \in O\}}] - P(O)) =$$
$$\frac{1}{2}\sum_{O \in \mathcal{O}} |\mathbb{E}[\mathbb{I}_{\{s \in O\}}] - P(O)| \leq \epsilon.$$
The last equality follows from a known identity of the total variation distance~\cite{levin:2008}.
\end{proof}

\end{document}